\def\1{{\mathds{1}}}
\def\vx{{\boldsymbol{x}}}
\DeclareMathAlphabet{\mathsfit}{\encodingdefault}{\sfdefault}{m}{sl}
\SetMathAlphabet{\mathsfit}{bold}{\encodingdefault}{\sfdefault}{bx}{n}
\newtheorem{theorem}{Theorem}
\newtheorem{proposition}[theorem]{Proposition}
\theoremstyle{definition}
\author{
	\normalsize	\textbf{Yi-Xiao He}, \ \textbf{Shen-Huan Lyu}, \ \textbf{Yuan Jiang}\\ 
	\normalsize	National Key Laboratory for Novel Software Technology,\\
	\normalsize	Nanjing University, Nanjing, 210023, China.\\ 
	{\normalsize \tt \{heyx,lvsh,jiangy\}@lamda.nju.edu.cn}
}
\date{}
\title{\vspace{-1.0cm}\hrule height 4pt\vspace{0.5cm}\textbf{Interpreting Deep Forest through Feature Contribution and MDI Feature Importance}\vspace{0.5cm}\hrule height 1.5pt \vspace{0.3cm}}
\providecommand{\keywords}[1]
{
	\normalsize	
	\textbf{\textit{Keywords: }} #1
}
\begin{document}
	
\maketitle
\thispagestyle{empty}

\begin{textblock}{10}(-0.0,9.70)\small
	Under review. 
\end{textblock}

\begin{abstract}\normalsize
	Deep forest is a non-differentiable deep model which has achieved impressive empirical success across a wide variety of applications, especially on categorical/symbolic or mixed modeling tasks. Many of the application fields prefer explainable models, such as random forests with feature contributions that can provide local explanation for each prediction, and Mean Decrease Impurity (MDI) that can provide global feature importance. However, deep forest, as a cascade of random forests, possesses interpretability only at the first layer. From the second layer on, many of the tree splits occur on the new features generated by the previous layer, which makes existing explanatory tools for random forests inapplicable. To disclose the impact of the original features in the deep layers, we design a calculation method with an estimation step followed by a calibration step for each layer, and propose our feature contribution and MDI feature importance calculation tools for deep forest. Experimental results on both simulated data and real world data verify the effectiveness of our methods. 
	
	\vspace{4pt}\noindent\keywords{ensemble methods, deep forest, feature importance, interpretability}
\end{abstract}

\section{Introduction}\label{sec:introduction}

By suggesting that the key to deep learning may lie in the \textit{layer-by-layer processing}, \textit{in-model feature transformation} and \textit{sufficient model complexity}, \citet{zhou17deep} propose the first deep forest model and the gcForest algorithm, which are realized by non-differentiable modules without backward-propagation in training. It consists of a cascade forest structure, each layer contains multiple random forests, and the predictive probability vectors output by the forests are concatenated with the original features, then serve as the input to the next layer~\cite{zhou2019deep}. Benefiting from the feature transformation in cascade structure, deep forests (DFs) outperform various classical tree-based algorithms, e.g., classification and regression tree \citep[CART]{breiman1984classification}, AdaBoost \citep{schapire2012boosting}, random forest \citep[RF]{breiman2001random}, gradient boost decision tree \citep[GBDT]{friedman2001greedy}, extremely random forest \citep[ERF]{geurts2006extremely} and XGBoost \citep{chen2016xgboost} in empirical studies. In recent years, deep forests have been widely extended to various real-world applications \citep{su2019deep,boualleg2019remote,zhang2019distributed} and learning tasks \citep{utkin2018siamese,utkin2019discriminative,yang2020multilabel,wang2020learning}. There are also variants aiming at improving performance and reducing computational and memory cost \citep{zhou2019deephash,pang2020improving,chen2021improving,ma2022hw}. 

Compared with deep neural networks (DNNs), the decision tree model \citep{loh2011classification}, which is the basic component of deep forest, is easier to interpret while improving the prediction performance. In order to make a prediction, trees ask each observation a series of questions, each one being the form:
\begin{equation}
	\text{IF:}\ \  x^{(j)}\gtreqless s,\ \  \text{THEN:} \ \ response\ ,
\end{equation}
where $(j,s)$ are to be determined by the splitting algorithm and $\gtreqless$ represents $\geq$ or $<$. Thus, the prediction for any instance only depends on the sequence of decision rules. However, forests prediction cannot be explained through decision rules because their prediction is the average of a large number of randomized decision trees \citep{breiman2001random,geurts2006extremely}. But researchers develop approach to compute \textit{feature contribution} for random forest regression and classification models~\citep{kuz2011interpretation,palczewska2014interpreting}. 
It enables us to explain every single prediction of a random forest through the contribution of each feature. 
While feature contribution focus on the local interpretation, Mean Decrease Impurity (MDI)~\citep{breiman2003random} offers a global \textit{feature importance} measure over the whole data set. It sums up the total reduction of the splitting criterion brought by that feature as the intrinsic feature importance measure. 

Since deep forest is more powerful than random forest, while we enjoy its good predictive performance, we naturally wants to know why it makes certain predictions. 
However, because of the feature transformation in the cascade forest structure, as \citet{lyu2022depth} have proved that the new features dominate the forests in the second and following layers, we lose the idea of how the original features function in deep layers. The interpretation tools for random forests are no longer applicable for deep forests, and knowing that feature transformations are important does not help the end user understand why the model makes specific predictions. 

To overcome this problem, we develop a two-step calculation process and extend feature contribution and feature importance to deep forest. The estimation step associates the contributions of the new features with the contributions of the original features at the previous layer through the specific training samples in the splitting nodes. The calibration step ensures that our proposed method yields proper feature contribution and feature importance. 

Our main contributions are summarized as follows. 
\begin{itemize}
	\item We propose a feature contribution calculation method for deep forest, enabling us to explain its predictions for each instance, i.e., whether the considered feature enlarges or reduces the predicted value for the considered class, and by how much. 
	\item We propose an MDI feature importance calculation method for deep forest, enabling us to tell the overall impact of each feature in building the whole model. 
	\item We analyze the properties that feature contribution and feature importance should have to ensure that our designed methods are proper feature contribution and feature importance. 
	\item Experimental results show that our proposed tools faithfully reflect the influence of each feature on deep forest prediction for both regression and classification problems.
	Furthermore, since deep forest is more powerful than random forest, our MDI feature importance also exhibits better estimation quality than that of random forest. 
\end{itemize}

The rest of this paper is organized as follows. Section~\ref{sec:relatedwork} introduces related work. Section~\ref{sec:background} briefly describes existing methods for calculating feature contribution and feature importance for random forests and discuss why they are not directly applicable to deep forest. Section~\ref{sec:properties} analyzes the properties of feature contribution and feature importance to provide guidance for developing our own methods for interpreting deep forests. Section~\ref{sec:DF-FC} and Section~\ref{sec:DF-FI} present our proposed explaining tools. Section~\ref{sec:experiments} reports the experimental results and Section~\ref{sec:conclusion} concludes the paper.

\section{Related work}\label{sec:relatedwork}

\paragraph{Deep forest}
Since \citet{zhou17deep} propose the first deep forest model, there are mainly three lines of work to study it. A line of work establishes a screening framework to reduce computational cost and memory requirement for deep forest, including confidence screening \citep{pang2018improving}, feature screening \citep{pang2020improving}, hash-based method \citep{zhou2019deephash,ma2022hw} and stability-based method \citep{chen2021improving}. 
The second line of work extends deep forest algorithms to different learning tasks and real applications. \citet{zhang2019distributed} use deep forest to achieve outstanding identification performance for financial anti-fraud system. \citet{yang2020multilabel} employ the measure-aware mechanisms to help deep forest solve multi-label problems. 
\citet{wang2020learning} address weak-label learning by using a label complement procedure in the feature transformation process of deep forest. For metric learning tasks, \citet{utkin2018siamese,utkin2019discriminative} propose a Siamese deep forest as an alternative to the Siamese neural network. 
Empirical successes have attracted a line of work to the theoretical analysis of deep forest. To start theoretical analysis of deep forests, \citet{lyu2019refined} prove a margin-based generalization bound for the additive cascade forest. For the consistency of deep forests, \citet{arnould2021analyzing} prove a tight bound on the excess risk of two-layer centered random trees, which regards the second layer as a variance reducer. Assuming that the raw and new features are separated and independently used in two stages, \citet{lyu2022region} prove that new features are easy to cause overfitting risk. 

Although deep forest has been widely used and studied, very little work has been done to help users understand the specific predictions made by deep forest models. \citet{kim2020interpretation} tried to improve the explainability by simplifying a trained deep random forest model. Although they select the most important paths in the component forests, going through all the selected paths layer by layer is still a big workload, hence the interpretability is limited. 
In this paper, we aim to explain the predictions made by the original trained deep forest, and the explanation is by directly telling how much the considered feature enlarges or reduces the predicted value for a given class. 

\paragraph{Explaining random forests}
Random forest has long been a successful machine learning method, especially for tabular data~\citep{grinsztajn2022tree}. A wealth of work has focused on explaining how each input feature functions in the model. 
To characterize the overall importance of each feature, there are two widely used measures: the Mean Decrease Impurity \citep[MDI]{breiman2003random} and the Mean Decrease Accuracy \citep[MDA]{breiman2001random}. MDI sums up the total reduction of splitting criterion caused by each feature respectively. It is known to favor features with many categories \citep{strobl2007bias,nicodemus2011stability} and may lead to systematic bias in feature selection by incorrectly assigning high importance to irrelevant features \citep{strobl2008danger,nicodemus2009predictor,li2019debiased,zhou2021unbiased}. 
On the other hand, MDA measures the importance of a feature by the reduction in the accuracy after randomly permuting the sample values of a given feature. Different permuting choices have been studied by \citet{strobl2008conditional,janitza2018computationally}. 
In empirical studies, \citet{wright2016little} show that MDI and MDA can capture interactions between features but are unable to differentiate from the marginal effects. From a theoretical perspective, there are only a few results on the feature importance of tree-based models. \citet{louppe2013understanding} investigate the theoretical MDI measure (infinite sample regime) when all features are categorical. After that, the result is extended by \citet{sutera2016context} to the context-dependent features case. 
Some additional feature importance measures, such as split count \citep{strobl2007bias,basu2018iterative} can also be used. 
Recently, there are studies focusing on interpreting every single prediction of random forest through feature contribution~\citep{palczewska2014interpreting,Saabas2014interpreting}. Meanwhile, using this tool, a debiased MDI estimate can be obtained~\cite{li2019debiased}. It functions well even when there are a large number of irrelevant features and severe noise that MDA fails because of poor accuracy. 
Since MDA is applicable to any black box models, it is also applicable to deep forest. However, to our knowledge, there is no existing work to extend feature contribution and MDI feature importance measure to deep forest. 

\section{Preliminaries}\label{sec:background}

In this section, we briefly introduce feature contribution and feature importance, and discuss why they are not directly applicable to deep forest. The key symbols and notations used in this paper are listed in Table~\ref{tab:notation}.

\begin{table}[ht]
	\centering
	\resizebox{0.9\linewidth}{!}{
		\begin{tabular}{c|c|p{13cm}}
			\toprule
			\textbf{Subject} & \textbf{Sign} & \textbf{Description} \\
			\midrule
			\multirow{4}*{Setting} & $\mathcal{D}$ & The training data.\\  
			& $n$ & The total number of training instances.\\
			& $K$ & The number of features.\\
			& $C$ & The number of classes.\\\midrule
			\multirow{10}*{\shortstack{Tree\\ \&\\ Forest}} & $t$ & A tree node.\\
			& $l$ & The depth of a leaf node. \\
			& $I(T)$ & All the internal nodes in tree $T$. \\
			& $n(t)$ & The number of training instances in node $t$.\\
			& $s(t)$ & The splitting feature of an internal node $t$. \\
			& $\Delta_{\mathcal{I}}(t)$ & The decrease of impurity by splitting node $t$.\\
			& $\mu_0$ & The average response of training data.\\
			& $\mu(t)$ & The average response of the training data in node $t$.\\
			& $\Delta \mu(t_i)$ & The change in average response of node $t_i$ and its parent node $t_{i-1}$.\\
			& $f_F(\vx)$ & The prediction of forest $F$ on $\vx$. \\
			& $f_{F,k}(\vx)$ & The amount that feature $k$ contributes to $F$'s prediction value on $\vx$. \\
			& $\operatorname{MDI}(k,F)$ & The MDI feature importance of feature $k$ in forest $F$. \\
			\midrule
			\multirow{12}*{\shortstack{Deep\\ Forest}}& $\hat{y}_i$ & The prediction of previous layer forest on $\vx_i$.\\
			& $\hat{\mu}(t)$ & The average predictive value by previous layer of the training data in node $t$, which can be viewed as average response estimated by previous layer's prediction.\\
			& $\Delta\hat{\mu}(t_i)$ & The change in average response of node $t_i$ and its parent node $t_{i-1}$ estimated by previous layer's prediction.\\
			& $\Delta\hat{\mu}(t_i,k)$ & Estimated average response change caused by feature $k$.\\
			& $\Delta\widetilde{\mu}(t_i,k)$ & The calibrated average predictive response change caused by feature $k$.\\
			& $k'$ & A new feature generated by previous layer's prediction.\\
			& $K'$ & The number of new features in the second and above layers.\\
			& $g$ & The calibration function that maps the estimated feature contribution $\left(\Delta \hat \mu(t_i,k)\right)_{k=1}^K$ to the calibrated feature contribution $\left(\Delta \widetilde \mu(t_i,k)\right)_{k=1}^K$.\\
			& $\widetilde{f}_{F', k}(\boldsymbol{x})$ & The calculated contribution of the original feature $k$ for the second layer forest $F'$.\\
			& $f_L(\vx)$ & The prediction of $\vx$ from the last layer of deep forest.\\
			& $\widetilde{f}_{L,k}(\boldsymbol{x})$ & The calculated contribution of feature $k$ in the last layer of deep forest.\\
			& $\widehat{\operatorname{MDI}}(k,DF)$ & The estimated MDI feature importance of feature $k$ in DF. \\
			\bottomrule
	\end{tabular}}
	\caption{Key symbols and notations.}
	\label{tab:notation}
\end{table}

\subsection{Feature contribution}
\label{sec:featurecontribution}
Feature contribution is applicable for every single prediction. It enables us to decompose the prediction into the sum of contributions from each feature~\citep{palczewska2014interpreting}. 

\paragraph{For decision trees}
The prediction of an instance $\boldsymbol{x}$ is the average of the training instances in the leaf node it falls in. Let $l$ denote the depth of the leaf node that $\vx$ falls in. Along the decision path to the leaf node, as $\vx$ goes through node $t_0, t_1, \ldots, t_{l}$, the average response of the training instances changes from node to node.
As demonstrated in Figure~\ref{fig:tree contribution}, the final prediction can be represented as 
\begin{equation}
	f_T(\boldsymbol{x}) = \mu(t_0)+\sum_{0 < i\leq l} \Delta \mu(t_i)\ ,
\end{equation}
where $\Delta \mu(t_i) = \mu(t_{i})-\mu(t_{i-1})$. $\Delta \mu(t_i)$ is the change in average response caused by splitting node $t_i$ from $t_{i-1}$.
Note that this calculation is applicable for both regression trees and classification trees. For regression, $f_T(\boldsymbol{x})$, $\mu(t_0)$ and $\Delta \mu(t_i)$ are scalars. For classification, they are $C$-dimensional vectors, where $C$ denotes the number of classes. 
Let $s(t)$ denote the splitting feature of node $t$. Let $K$ denote the number of input features. If we sum up the split contributions made by the same feature, the prediction of $\boldsymbol{x}$ can be written as
\begin{equation}
	f_T(\boldsymbol{x})= \mu(t_0)+\sum_{k=1}^K f_{T, k}(\boldsymbol{x})\ ,
\end{equation}
where 
\begin{equation}
	f_{T, k}(\boldsymbol{x}) = \sum_{0<i\leq l:s(t_{i-1})=k} \Delta \mu(t_i)
\end{equation}
is the contribution of feature $k$ in the prediction of tree $T$ on $\vx$.

\begin{figure}[!t]
	\centering
	\includegraphics[width=0.6\linewidth]{./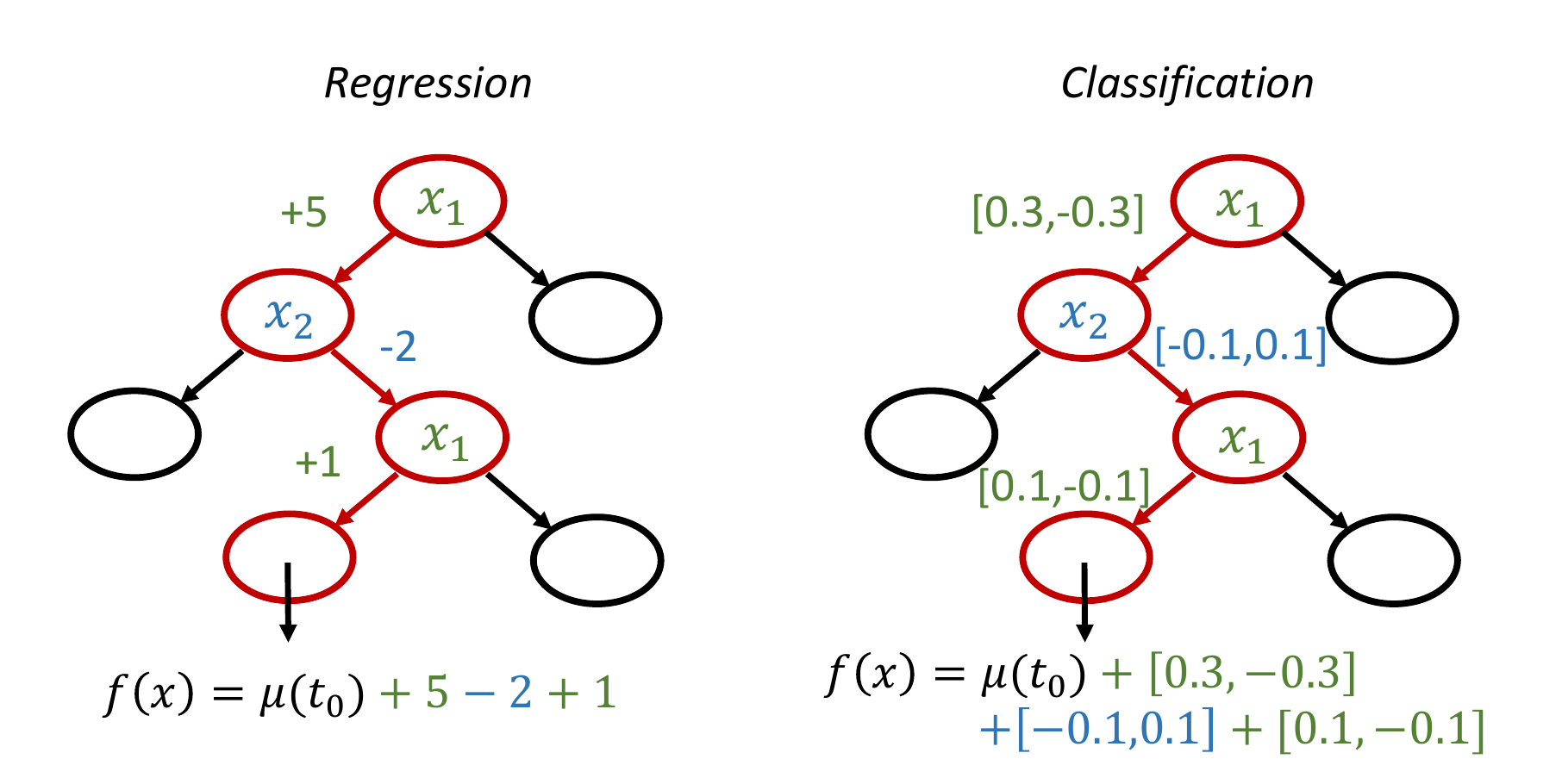}
	\caption{The contribution of each split to the prediction of an instance $\boldsymbol{x}$. Color blue and green correspond to the change of average response by splitting on $x_1$ or $x_2$. Positive and negative values indicate the split enlarges or decreases the predictive value or the probability for the corresponding class.}
	\label{fig:tree contribution}
\end{figure}

\paragraph{For random forests}
To extend feature contribution from a tree to a forest, we only need to take the average of the trees in the forest. Assuming each tree has the same distribution of training instances, we use $\mu_0$ to substitute $\mu(t_0)$. Therefore,
\begin{equation}
	\label{eq:forest}
	f_F(\boldsymbol{x})= \mu_0 + \sum_{k=1}^K f_{F, k}(\boldsymbol{x}),
\end{equation}
where
\begin{equation}
	f_{F, k}(\boldsymbol{x}) = \frac{1}{|F|}\sum_{T\in F} f_{T, k}(\boldsymbol{x})
\end{equation}
represents the contribution of feature $k$ in forest $F$'s prediction.

\subsection{Feature importance}
Mean Decrease Impurity (MDI)~\citep{breiman2003random} is an intrinsic feature importance measure for random forests. It measures the overall importance of each feature in building the whole model. 
For a tree $T$, its MDI is calculated by summing the impurity decrease in each internal node weighted by the fraction of training data in the node. Let $n$ denote the number of training instances, $I(T)$ denote all the internal nodes in tree $T$, $n(t)$ denote the number of training instances falling in node $t$. The feature importance of feature $k$ in tree $T$ is calculated as
\begin{equation}
	\label{eq:MDI}
	\operatorname{MDI}(k, T)=\sum_{t \in I(T), s(t)=k} \frac{n(t)}{n} \Delta_{\mathcal{I}}(t)\ ,
\end{equation}
where the decrease impurity of any node $t$ is defined as

\begin{equation}
	\begin{split}
		\Delta_{\mathcal{I}}(t)\triangleq&\operatorname{Impurity}(t)-\frac{n\left(t^{\text {left}}\right)}{n(t)} \operatorname{Impurity}\left(t^{\text {left}}\right)
		-\frac{n\left(t^{\text {right}}\right)}{n(t)} \operatorname{Impurity}\left(t^{\text {right}}\right)\ ,
	\end{split}
\end{equation}
and the impurity of any node $t$ is defined as
\begin{equation}
	\label{eq:impurity}
	\operatorname{Impurity}(t)\triangleq\frac{1}{n(t)} \sum_{i: \mathbf{x}_{i} \in t}\left(y_{i}-\mu(t)\right)^{2}\ .
\end{equation}
Here $t^{\text {left}}$ and $t^{\text {right}}$ are two child nodes of node $t$, $\mu(t)$ is the average response of training instances in node $t$.
Note that the above expressions assume using the sample variance of labels as the impurity measure for regression problems. For classification problems, \citet{li2019debiased} disclose that using one-hot encoding of the categorical responses, i.e., substituting $y$ and $\mu(t)$ with vectors, the impurity expression in Eq.~\eqref{eq:impurity} is equivalent to Gini index, a popular impurity measure for classification.

Since the forest is an average of all the individual trees, the feature importance of feature $k$ in forest $F$ is also the average 
\begin{equation}
	\operatorname{MDI}(k, F)=\frac{1}{|F|}\sum_{T\in F}\operatorname{MDI}(k, T)\ .
\end{equation}

\subsection{Deep forest and the difficulty of interpretation}

Figure~\ref{fig:DF} illustrates the cascade forest structure of deep forests. Each layer consists of multiple random forests, with different colors indicating different kinds of random forests. The whole model is built of multiple layers of forests, and the number of layers is automatically determined by the validation accuracy. Note that each layer outputs its prediction vectors (in gray color) to serve as new features for the next layer. That is to say, for the second and deeper layers, the input features are the concatenation of the original features (in red color) and new features. With the layer-by-layer processing, the predictive performance can be improved accordingly.

\begin{figure*}[t]
	\subfigure[The cascade forest structure of deep forests. Color black and blue indicate different kinds of random forests. The vectors in gray color are the new features generated by individual forests, being input to the next layer.\label{fig:DF}]{
		\begin{minipage}[h]{0.53\columnwidth}
			\centering
			\includegraphics[height=0.20\textheight]{./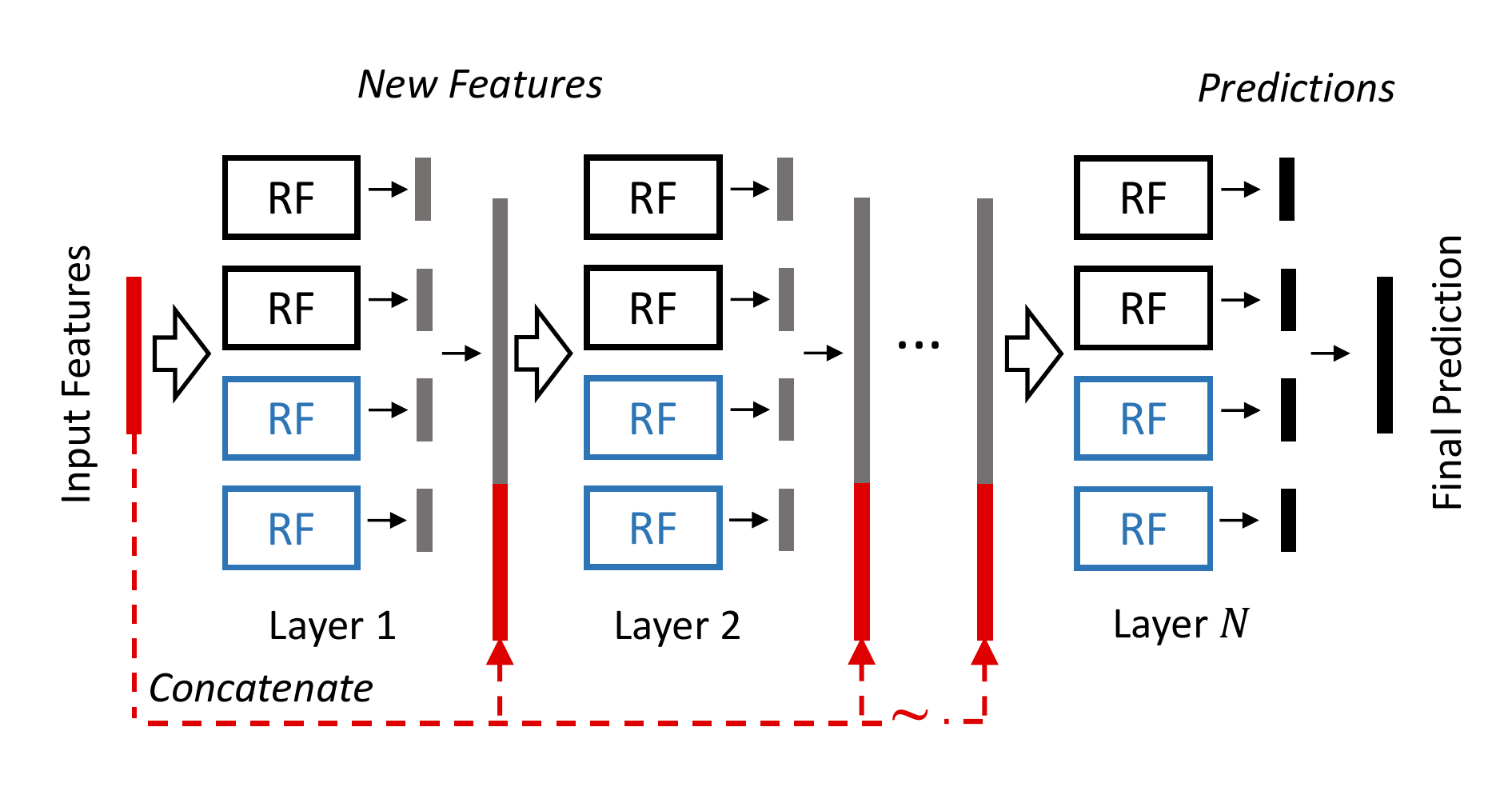}
	\end{minipage}}
	\hfill
	\subfigure[A typical MDI feature importance result of the second and above layers. $X_{new}$ is the sum of MDI of new features. The new features tend to have dominating importance.\label{fig:histImportance}]{
		\begin{minipage}[h]{0.43\columnwidth}
			\centering
			\includegraphics[height=0.20\textheight]{./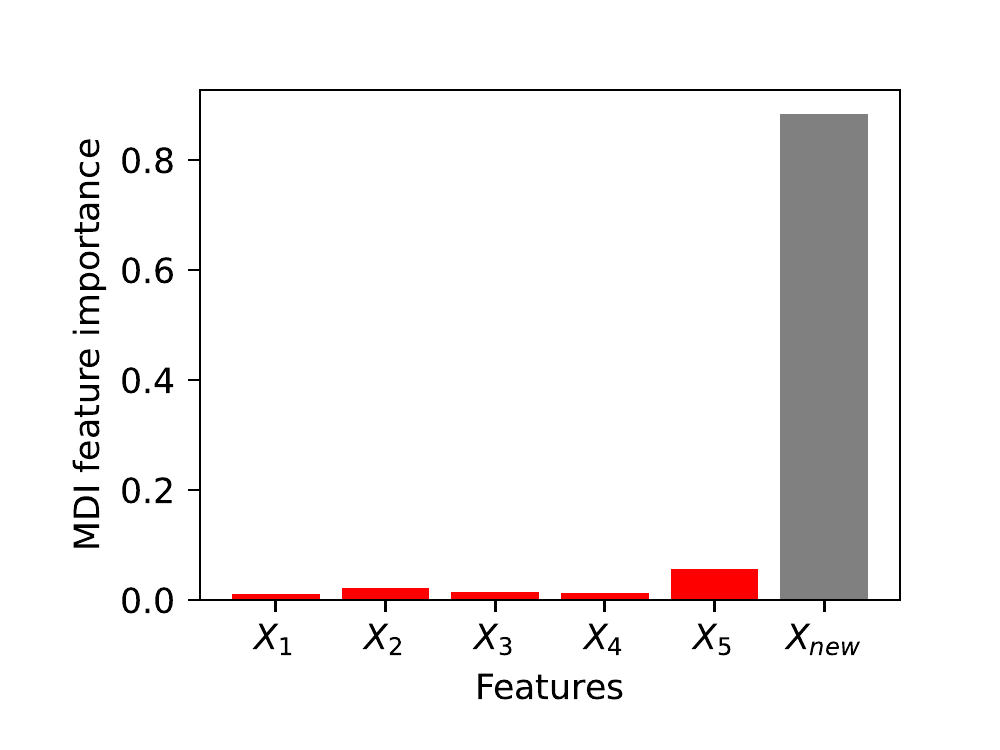}
	\end{minipage}}
	\caption{The new features play an important role in deep forests but are hard to interpret.}
	\label{fig:features}
\end{figure*}

Existing empirical results and theoretical analyses~\citep{arnould2021analyzing,lyu2022region} show that in the second and following layers, the individual trees in random forests will primarily split on the new features, especially in the first several nodes of decision trees. For an arbitrary data set, the feature importance in the second layer forest will very likely behave like Figure~\ref{fig:histImportance}. Apart from the fact that the new features are very important in prediction, it does not help us to understand the prediction made by deep forest because the new feature is the interaction of all the old features. 

\section{Properties of feature importance and feature contribution}
\label{sec:properties}
To interpret deep forests, we start with analyzing the properties that feature contribution and feature importance should have and their relationship. 
These serve as guidance for developing interpretation tools for deep forests, ensuring that our designed methods are proper feature importance and feature contribution.

\subsection{Properties of feature contribution}
The term \emph{feature contribution} means decomposing the prediction on a single instance into a sum of contributions from each feature.
A proper feature contribution should satisfy the following properties,
\begin{itemize}
	\item Feature contribution is defined for each single prediction.
	\item 
	In regression problems, feature contribution is a $K$-dimensional vector, whose element $f_{k}(\boldsymbol{x})$ denotes the contribution of feature $k$ in the final prediction. In classification problems, 
	feature contribution is a matrix of shape $(K,C)$, its element $f_{k,c}(\boldsymbol{x})$ denoting the contribution of feature $k$ in the predictive probability of class $c$.
	\item The elements of feature contribution can be positive or negative or zero. 
	\item Given an instance $\vx$, the sum of bias and feature contribution equals the prediction. That is, 
	for regression,
	\begin{equation}
		f(\boldsymbol{x})= \mu_0+\sum_{k=1}^K f_{k}(\boldsymbol{x})\ .
	\end{equation}
	For classification,
	\begin{equation}
		f(\boldsymbol{x})_c= \mu_{0,c}+\sum_{k=1}^K f_{k,c}(\boldsymbol{x}), 1\leq c\leq C\ .
	\end{equation}
\end{itemize}

\subsection{Properties of feature importance}
The term \emph{feature importance} means the overall impact of each feature in building the whole model. 
A proper feature importance measure should have the following properties,
\begin{itemize}
	\item Feature importance is defined over a whole data set, with labels available.
	\item Feature importance is a $K$-dimensional vector, for both regression and classification.
	\item The elements of feature importance are non-negative. 
	\item The sum of feature importance is no larger than the total response variance of the data set. For regression, the total response variance is $\frac{1}{n} \sum_{i} (y_i - \mu_0)^2$. For classification, the total response variance is defined to be $\frac{1}{n} \sum_i \sum_{1\leq c\leq C}(y_{i,c}-\mu_{0,c})^2$.
\end{itemize}

Note that here we use variance-based definitions for feature importance. We can easily scale the feature importance to sum to 1 by normalizing with the total response variance.

\subsection{Relationship between feature contribution and feature importance}
Recently, \citet{li2019debiased} disclose that the original expression of MDI as Eq.~\eqref{eq:MDI} can be written as
\begin{equation}
	\label{eq:covrelationship}
	\operatorname{MDI}(k, T) 
	= \frac{1}{\left|\mathcal{D}\right|} \sum_{i \in \mathcal{D}} f_{T, k}\left(\boldsymbol{x}_{i}\right) \cdot y_{i}
	= \operatorname{Cov}\left(f_{T,k}\left(\boldsymbol{x}_{i}\right), y_{i}\right).
\end{equation}

Here $\mathcal{D}$ is the data set we use for computing MDI feature importance. If $\mathcal{D}$ is the same as the training set, then Eq.~\eqref{eq:covrelationship} yields the same result as the original MDI expression. \citet{li2019debiased} claim that using out-of-bag data to calculate Eq.~\eqref{eq:covrelationship} can achieve an unbiased result. 

Equation~\eqref{eq:covrelationship} offers a new way of computing MDI feature importance in a decision tree $T$, i.e., the covariance of feature contribution $f_{T,k}(\boldsymbol{x})$ and label $y$. Before, we could only compute MDI by summing the decrease of impurity at each node. Equation~\eqref{eq:covrelationship} enables us to compute feature importance by summing over instances, decoupling the computation of feature contribution from the tree training process.

For a forest $F$, suppose we use the same data set $\mathcal{D}$ to calculate MDI for every tree in it, its MDI can be calculated as
\begin{equation}
	\label{eq:MDI-forest}
	\operatorname{MDI}(k, F) 
	=\frac{1}{|F|}\sum_{T \in F} \operatorname{MDI}(k, T) 
	= \frac{1}{\left|\mathcal{D}\right|} \sum_{i \in \mathcal{D}} y_{i} f_{F, k}\left(\boldsymbol{x}_{i}\right).
\end{equation}

\section{Feature contribution for deep forest}
\label{sec:DF-FC}

To compute feature contribution for deep forest, the main difficulty lies in the splits on the new features. We propose to use the training samples in the corresponding node to link the influence of splits back to the last layer forests. 
However, the estimated contribution of the original features through last layer's prediction are probably imprecise, and the error will accumulate layer by layer. Therefore, we further propose a calibration step to ensure that the calculation result after each layer maintains a proper feature contribution measure.
\subsection{The estimation step}
Suppose the node $t_i$ in a second-layer tree $T'$ uses a new feature $k'$ to split. We can estimate the average response using the predictions from the previous layer
\begin{align*}
	\hat{\mu}(t_i)  &= \frac{1}{n(t_i)} \sum_{j: \boldsymbol{x}_{j} \in t_i} \hat{y}_{j}\\
	&= \frac{1}{n(t_i)} \sum_{j: \boldsymbol{x}_{j} \in t_i} \left(\mu_0 + \sum_{k=1}^K f_{F, k}(\boldsymbol{x}_j)\right)\\
	&= \mu_0 + \frac{1}{n(t_i)} \sum_{j: \boldsymbol{x}_{j} \in t_i}\left(\sum_{k=1}^K f_{F, k}(\boldsymbol{x}_j)\right),
\end{align*}
where $F$ is the forest in the previous layer that generates $k'$ (i.e., $\hat{y}$), and $f_{F,k}(\vx)$ is the amount that feature $k$ contributes to $F$'s prediction value on $\vx$.

Thus the estimated response change from splitting cell $t_{i-1}$ into $t_i$ is
\begin{align}
	\Delta \hat{\mu}(t_i)=&
	\frac{1}{n(t_{i})} \sum_{j: \boldsymbol{x}_{j} \in t_{i}}\left(\sum_{k=1}^K f_{F, k}(\boldsymbol{x}_j)\right)
	-\frac{1}{n(t_{i-1})} \sum_{j: \boldsymbol{x}_{j} \in t_{i-1}}\left(\sum_{k=1}^K f_{F, k}(\boldsymbol{x}_j)\right)\ .
\end{align}

Let
\begin{align}
	\label{eq:estimated}
	\Delta \hat\mu(t_i,k)=&\frac{1}{n(t_{i})} \sum_{j: \boldsymbol{x}_{j} \in t_{i}} f_{F, k}(\boldsymbol{x}_j)
	-\frac{1}{n(t_{i-1})} \sum_{j: \boldsymbol{x}_{j} \in t_{i-1}} f_{F, k}(\boldsymbol{x}_j)
\end{align}
denote the contribution of feature $k$ in the estimated change of average response, then
\begin{equation}
	\Delta \hat{\mu}(t_i)=\sum_{k=1}^K{\Delta \hat\mu(t_i,k)}\ .
\end{equation}

Therefore we attribute the estimated response change caused by splitting node $t_i$ using feature $k'$ to the contribution of original features.

\subsection{The calibration step}\label{sec:calibration}

Meanwhile, using the labels of data, the actual change in average response from node $t_i$ to $t_{i+1}$ is calculated as 
\begin{align}
	\label{eq:directdelta}
	\Delta \mu(t_i)=&
	\mu(t_{i})-\mu(t_{i-1})
	=\frac{1}{n(t_{i})} \sum_{j: \boldsymbol{x}_{j} \in t_{i}} {y}_{j}-\frac{1}{n(t_{i-1})} \sum_{j: \boldsymbol{x}_{j} \in {t_{i-1}}} {y}_{j}\ .
\end{align}

However, $\Delta \hat{\mu}(t_i)$ and $\Delta \mu(t_i)$ are usually unequal, because the predicted response of $\vx_j$ is usually not exactly equal to its true label. And the error might propagate layer by layer. To avoid this, we propose a calibration step after estimation to ensure the computed feature contribution satisfies the property that the sum of bias and feature contribution equals the prediction as stated in Section \ref{sec:properties}. 

The calibration operation can be expressed as a function
\begin{align}
	\label{eq:calibration}
	g: &\left(\Delta \hat\mu(t_i, 1),\Delta \hat\mu(t_i, 2),\ldots, \Delta \hat\mu(t_i, K)\right)
	\rightarrow \left(\Delta \widetilde\mu(t_i, 1),\Delta \widetilde\mu(t_i, 2),\ldots, \Delta \widetilde\mu(t_i, K)\right)\ ,
\end{align}
such that
\begin{align}\label{eq:checkcalibration}
	\sum_{k=1}^K{\Delta \widetilde\mu(t_i, k)} = \Delta \mu(t_i)\ .
\end{align}

Thus the prediction of $f_{T'}(\boldsymbol{x})$ can be interpreted as the following decomposition, 
\begin{equation}
	f_{T'}(\boldsymbol{x})= \mu_0 +\sum_{k=1}^K \widetilde{f}_{T', k}(\boldsymbol{x})\ ,
\end{equation}
where
\begin{equation}
	\widetilde{f}_{T', k}(\boldsymbol{x}) = \sum_{s(t_{i-1})=k} \Delta \mu(t_i) + \sum_{s(t_{i-1})=k'}\Delta \widetilde \mu(t_i,k)
\end{equation}
is the calibrated contribution of the $k$-th original feature.

\paragraph{Naive multiplicative calibration}
A naive instantiation of Eq.~\eqref{eq:calibration} is to re-scale the decomposition using a common scale factor, 
\begin{equation}
	\label{eq:multiplicative}
	\begin{split}
		\Delta \widetilde\mu(t_i,k) =  \Delta \hat\mu(t_i,k)\frac{\Delta \mu(t_i)}{\Delta \hat{\mu}(t_i)}\ .
	\end{split}
\end{equation}
Obviously, Eq.~\eqref{eq:checkcalibration} holds. 
However, since the feature contributions can be both positive and negative, the naive re-scaling might lead to numerical problems.

\paragraph{Naive additive calibration}
Using additive calibration can better avoid numerical problems than multiplicative scaling. The naive modification is proportional to the absolute value of its estimated contribution.
\begin{equation}
	\label{eq:additive}
	\Delta \widetilde\mu(t_i,k) = \Delta \hat\mu(t_i,k)+ \frac{|\Delta \hat\mu(t_i,k))|}{\sum_{k}{|\Delta \hat\mu(t_i,k))|}}(\Delta \mu(t_i)-\Delta\hat\mu(t_i))\ .
\end{equation}
However, if the value of $(\Delta \mu(t_i)-\Delta\hat\mu(t_i))$ is large, features whose estimated contribution is negative will be modified to be positive, and the more negative it was, the more positive it will be, which is obviously not sensible.

\paragraph{Partial additive calibration}
Here we advocate a partial calibration method. 
We choose a subset of features according to their signs
\begin{equation}
	S(t_i) = \{k:\operatorname{sign}(\Delta \hat\mu(t_i,k))=\operatorname{sign}(\Delta \mu(t_i))\}\ ,
\end{equation}
and we only calibrate this subset of features
\begin{equation}\label{eq:partial}
	\Delta \widetilde\mu(t_i,k) = \Delta \hat\mu(t_i,k)\left(1+\frac{\Delta \mu(t_i)-\Delta\hat\mu(t_i)}{\sum_{k\in S(t_i)}{\Delta \hat\mu(t_i,k))}}\right)\ , \ k\in S(t_i)\ .
\end{equation}
It is easy to check that Eq.~\eqref{eq:checkcalibration} holds. Therefore, the response change brought by splitting on $k'$ has been decomposed into contributions of the original $K$ features. 

\subsection{The calculation procedure}
Algorithm~\ref{alg:calculateContribution} illustrates the procedure of calculating feature contribution for a tree.
When traversing the tree, if the node chooses an original feature to split, we simply add the contribution to this feature; if the node uses a new feature to split, we trace back the contribution to each of the corresponding original features as in Eq.~\eqref{eq:estimated} and calibrate as in Eq.~\eqref{eq:calibration}.
The calculation is done by recursively call \textit{CalculateContribution} and get the contributions at all the leaf nodes. To compute feature contribution on a given instance, we simply find the leaf node $t$ that $\vx$ falls in and get $\widetilde{f}_{T',k}(\vx) = f_{t,k}$.

\begin{algorithm} 
	\caption{Calculate feature contribution for a tree} 
	\label{alg:calculateContribution} 
	\begin{algorithmic}[1]
		\REQUIRE Tree node $t$ 
		\ENSURE CalculateContribution($t$)
		\IF{$t$ is a root node} 
		\STATE $f_{t,k}\gets 0, k=1,\ldots,K$ 
		\ENDIF 
		\FOR{each child node $t^*$ of $t$}
		\IF{$s(t)\in \{1,\ldots,K\}$} 
		\STATE Calculate $\Delta \mu(t^*,s(t))$ as in Eq.~\eqref{eq:directdelta}
		\STATE $f_{t^*,s(t)}\gets f_{t,s(t)}+\Delta {\mu}(t^*,s(t))$
		\ELSE
		\STATE Calculate $\left(\Delta \widetilde{\mu}(t^*,k)\right)_{k=1}^K$ using Eqs.~\eqref{eq:estimated} and~\eqref{eq:calibration}
		\STATE $f_{t^*,k}\gets f_{t,k}+\Delta \widetilde{\mu}(t^*,k),\ k=1,\ldots,K$
		\ENDIF
		\IF {node $t^*$ is a leaf node}
		\RETURN $\left(f_{t^*,k}\right)_{k=1}^K$
		\ELSE
		\STATE CalculateContribution($t^*$)
		\ENDIF
		\ENDFOR
	\end{algorithmic} 
\end{algorithm}

Based on the analysis of a tree, it is easy to interpret the prediction of a forest
\begin{equation}
	f_{F'}(\boldsymbol{x})= \mu_0 +\sum_{k=1}^K \widetilde{f}_{F', k}(\boldsymbol{x})\ ,
\end{equation}
where
\begin{equation}
	\widetilde{f}_{F', k}(\boldsymbol{x}) = \frac{1}{|F'|}\sum_{T'\in F'} \widetilde{f}_{T', k}(\boldsymbol{x})\ .
\end{equation}

For the above analysis, we assume $T'$ is the individual tree in the second-layer forest $F'$. But it is easy to see that, after the above calculation, we will get the contributions of the $K$ original features $f_{F',k}(\boldsymbol{x})$. Therefore, from the third layer and beyond, we can repeat the same calculation strategy.

Usually, each layer in a deep forest contains several forests. The final prediction is made by taking the average of all the outputs of the last layer of forests. Let $L$ denote the set containing forests in the last layer, the final prediction $f(\boldsymbol{x})$ made by deep forest can be interpreted as
\begin{equation}
	f_L(\boldsymbol{x})=\mu_0 +\sum_{k=1}^K \widetilde{f}_{L,k}(\boldsymbol{x}),
\end{equation}
where
\begin{equation}\label{eq:L-contribution}
	\widetilde{f}_{L,k}(\boldsymbol{x}) = \frac{1}{|L|} \sum_{F\in L} \widetilde{f}_{F,k}(\boldsymbol{x})
\end{equation}
is contribution of feature $k$ in the last layer of forests, which is also the feature contribution for the whole deep forest.

\noindent\textbf{Time complexity.}\quad
As shown in Algorithm~\ref{alg:calculateContribution}, we only need to traverse each tree once and do simple computation concerning the feature contributions of the training instances falling in each node. The depth of a tree is approximately $O(\log n)$. For nodes that have the same depth, all the training instances are processed once. 
Therefore, 
the time complexity of the computation for a tree is $O(n\log n)$. Let $M_1$ denote the number of trees in each layer, $M_2$ denote the maximum number of layers, the time complexity for computing feature contribution for deep forest is $O(M_1 M_2 n\log n)$.

\noindent\textbf{Space complexity.}\quad When computing feature contribution for a tree, we need to maintain $\left[f_{t,k}\right]_{k=1}^K$ through the traversal of a tree and store $\left[f_{t,k}\right]_{k=1}^K$ for every leaf node. $f_{t,k}$ is a $C$-dimensional vector for classification problems (set $C=1$ for regression problems), and the number of leafs node is at most $n$. Therefore, the space complexity is $O(n KC)$. 

While computing feature contribution for a forest, we only need to keep the average of tree contributions on the training instances instead of the nodes of all the trees. Therefore the space complexity is still $O(nKC)$.

While computing feature contribution layer by layer through the cascade forest structure, we only need to keep the feature contribution of the previous layer. Let $|L|$ denote the number of forests in each layer, the space complexity is $O(nKC|L|)$.

\section{Feature importance for deep forests}\label{sec:DF-FI}
After obtaining feature contribution for deep forest, according to Eq.~\eqref{eq:MDI-forest}, we can compute MDI feature contribution for deep forest (DF) as
\begin{equation}
	\label{eq:DF-FI}
	\widehat{\operatorname{MDI}}(k,DF) 
	= \frac{1}{\left|\mathcal{D}\right|} \sum_{i \in \mathcal{D}} \widetilde{f}_{L, k}\left(\boldsymbol{x}_{i}\right) \cdot y_{i},
\end{equation}
where $\widetilde{f}_{L, k}$ is defined in Eq.~\eqref{eq:L-contribution}. We then show that it is a proper feature importance measure. 

\begin{proposition}\label{prop:MDI_DF}
	The sum of the estimated MDI feature importance as Eq.~\eqref{eq:DF-FI} over all the original features equals the sum of the last layer MDI directly over the $K+K'$ features ($K'$ denoting the number of new features), i.e.,
	\begin{equation}
		\sum_{k=1}^K \widehat{\operatorname{MDI}}(k,DF) = \frac{1}{|L|}\sum_{F \in L}\sum_{k=1}^{K+K'} \operatorname{MDI}(k,F).
	\end{equation}
\end{proposition}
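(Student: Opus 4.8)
The plan is to reduce \emph{both} sides of the claimed identity to the single instance-wise sum $\frac{1}{|\mathcal{D}|}\sum_{i\in\mathcal{D}} y_i\bigl(f_L(\vx_i)-\mu_0\bigr)$, exploiting the ``summation-over-instances'' form of MDI in \eqref{eq:MDI-forest} together with the two ``contributions sum to prediction minus bias'' decompositions.

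First I would rewrite the left-hand side. By the definition \eqref{eq:DF-FI},
\[
\sum_{k=1}^K \widehat{\operatorname{MDI}}(k,DF) = \frac{1}{|\mathcal{D}|}\sum_{i\in\mathcal{D}} y_i \sum_{k=1}^K \widetilde{f}_{L,k}(\vx_i).
\]
The calibration step of Section~\ref{sec:calibration} is built precisely so that at every split on a new feature \eqref{eq:checkcalibration} holds; telescoping the per-node changes along each root-to-leaf path as in Section~\ref{sec:featurecontribution}, then averaging over the trees of each forest and over the forests in $L$, yields the exact decomposition $f_L(\vx)=\mu_0+\sum_{k=1}^K \widetilde{f}_{L,k}(\vx)$ stated just above \eqref{eq:L-contribution}. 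Hence $\sum_{k=1}^K \widetilde{f}_{L,k}(\vx_i)=f_L(\vx_i)-\mu_0$, so the left-hand side equals $\frac{1}{|\mathcal{D}|}\sum_i y_i\bigl(f_L(\vx_i)-\mu_0\bigr)$.

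Next I would treat the right-hand side. Each $F\in L$ is an ordinary random forest over the full set of $K+K'$ features, so \eqref{eq:MDI-forest}, with the feature index ranging over all $K+K'$ features, gives $\operatorname{MDI}(k,F)=\frac{1}{|\mathcal{D}|}\sum_i y_i f_{F,k}(\vx_i)$, and \eqref{eq:forest} gives $\sum_{k=1}^{K+K'} f_{F,k}(\vx_i)=f_F(\vx_i)-\mu_0$, with the \emph{same} bias $\mu_0$ at every layer since the responses $y_i$ are never transformed. Summing over $k$ and over $F\in L$, interchanging the finite sums, and using $f_L(\vx)=\frac{1}{|L|}\sum_{F\in L} f_F(\vx)$, we get $\frac{1}{|L|}\sum_{F\in L}\sum_{k=1}^{K+K'}\operatorname{MDI}(k,F)=\frac{1}{|\mathcal{D}|}\sum_i y_i\bigl(f_L(\vx_i)-\mu_0\bigr)$, which coincides with the left-hand side, completing the proof.

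The only real obstacle — and it is a mild one — is justifying the exact identity $f_L(\vx)=\mu_0+\sum_{k=1}^K\widetilde{f}_{L,k}(\vx)$, i.e.\ that calibration never breaks the decomposition property as one recurses through the layers: one has to verify that \eqref{eq:checkcalibration} indeed holds for whichever calibration map $g$ is used (in particular the advocated partial additive scheme \eqref{eq:partial}) and that it composes correctly through Algorithm~\ref{alg:calculateContribution}. Everything else is bookkeeping: linearity of the instance sums and the harmless reordering of the sums over $k$, $i$, and $F$.
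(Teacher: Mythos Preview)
Your proof is correct. It differs from the paper's in the level at which the calibration identity is invoked. The paper works per last-layer forest $F$ and per tree $T\in F$, expanding $\widetilde f_{T,k}(\vx_i)$ into the two pieces $\sum_{s(t_{j-1})=k}\Delta\mu(t_j)$ and $\sum_{k'}\sum_{s(t_{j-1})=k'}\Delta\widetilde\mu(t_j,k)$; it then applies the node-level calibration \eqref{eq:checkcalibration} inside the second piece, swaps the $k$ and $k'$ sums, and recognizes the result as $\sum_{k}\operatorname{MDI}(k,F)+\sum_{k'}\operatorname{MDI}(k',F)$. You instead stay at the layer level throughout: you collapse both sides to the common instance sum $\frac{1}{|\mathcal D|}\sum_i y_i\bigl(f_L(\vx_i)-\mu_0\bigr)$ by invoking the two already-established ``contributions sum to prediction minus bias'' identities as black boxes. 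Your route is shorter and arguably cleaner; the paper's route is more explicit about how the node-level calibration condition is the sole driver of the equality, which has some expository value given that the proposition is meant to justify the calibration design.
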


\begin{proof}
	For a last-layer forest $F$,
	\begin{align}
		\notag\sum_{k=1}^K\widehat{\operatorname{MDI}}(k,F)
		&= \sum_{k=1}^K\frac{1}{\left|\mathcal{D}\right|} \sum_{i \in \mathcal{D}} \widetilde{f}_{F, k}\left(\boldsymbol{x}_{i}\right) \cdot y_{i}\\
		\notag= &\sum_{k=1}^K\frac{1}{\left|\mathcal{D}\right|} \sum_{i \in \mathcal{D}}\left(\frac{1}{|F|}\sum_{T\in F} \widetilde{f}_{T, k}(\boldsymbol{x}_i)\right)\cdot y_{i}\\
		\notag= &\sum_{k=1}^K\frac{1}{\left|\mathcal{D}\right|} \sum_{i \in \mathcal{D}}\left(\frac{1}{|F|} \sum_{T\in F}\left(\sum_{j\in T:s(t_{j-1})=k} \Delta \mu(t_j) + \sum_{k'=K+1}^{K+K'}\sum_{j\in T:s(t_{j-1})=k'} \Delta \widetilde\mu(t_j,k)\right)\right)\cdot y_{i}\\ 
		\notag= &\sum_{k=1}^K\frac{1}{\left|\mathcal{D}\right|} \sum_{i \in \mathcal{D}}\left(\frac{1}{|F|}\sum_{T\in F} \sum_{j\in T:s(t_{j-1})=k} \Delta \mu(t_j) \right)\cdot y_{i} + \frac{1}{\left|\mathcal{D}\right|} \sum_{i \in \mathcal{D}}\left(\frac{1}{|F|}\sum_{T\in F} \sum_{k'=K+1}^{K+K'} \sum_{j\in T:s(t_{j-1})=k'}\Delta \mu(t_j)\right)\cdot y_{i}\\
		\notag= &\sum_{k=1}^K\frac{1}{\left|\mathcal{D}\right|} \sum_{i \in \mathcal{D}}\left(\frac{1}{|F|}\sum_{T\in F} f_{T,k}(\boldsymbol{x}_i) \right)\cdot y_{i} + \sum_{k'=K+1}^{K+K'} \frac{1}{\left|\mathcal{D}\right|} \sum_{i \in \mathcal{D}}\left(\frac{1}{|F|}\sum_{T\in F} f_{T,k'}(\boldsymbol{x}_i) \right)\cdot y_{i}\\ 
		\notag= &\sum_{k=1}^K \operatorname{MDI}(k,F)+\sum_{k'=K+1}^{K+K'} \operatorname{MDI}(k',F)\\ 
		= &\sum_{k=1}^{K+K'} \operatorname{MDI}(k,F)\ .
	\end{align}
	
	Taking the averaging over all forests in the last layer,
	\begin{align}
		\notag\sum_{k=1}^K\widehat{\operatorname{MDI}}(k,DF) &= \frac{1}{|L|}\sum_{F \in L}\sum_{k=1}^K\widehat{\operatorname{MDI}}(k,F)  
		=\frac{1}{|L|}\sum_{F \in L}\sum_{k=1}^{K+K'}\operatorname{MDI}(k,F)\ .
	\end{align}
\end{proof}

Proposition~\ref{prop:MDI_DF} shows that the proposed method is a way of disassembling the importance of the new feature to the original features, thus meeting the properties suggested in Section~\ref{sec:properties} that a proper MDI feature importance should have. 

\section{Experiments}\label{sec:experiments}
We first show that our computation methods yield reasonable results for feature contribution and feature importance of deep forest in Section~\ref{sec:exp-FI} and Section~\ref{sec:exp-MDI}. We generate simulated data sets with clear mechanisms behind them, and we conduct experiments on both regression and classification cases. 
Then in Section~\ref{sec:exp-rank}, we compare the quality of our deep forest MDI to other feature importance measures based on its ability of identifying relevant features, showing that as deep forest is more powerful than random forest, it also has better estimation of feature importance. 
In Section~\ref{sec:exp-calib}, we further compare the quality of our deep forest MDI method equipped with different calibration methods, showing that partial additive calibration is the most suitable choice.
In order to facilitate the understanding of possible applications in real-world tasks, in Section~\ref{sec:bikesharing}, a real-world bike sharing data set is taken as an example to show the calculation results of feature contribution and feature importance.

\subsection{Illustration of feature contribution for deep forest}\label{sec:exp-FI}
In this section, we illustrate our computation of feature contribution for deep forest. 
We generate synthetic data sets for regression and classification respectively. 
With the synthetic data sets, we are able to check whether the calculated feature contributions match the underlying data generating process. 
We first report the change in deep forest's test error layer by layer. Then visualize the feature contributions in the first layer and last layer respectively. 

In this experiment, each layer of deep forest has 4 forests, and each forest contains 50 trees, with their maximum depth fixed to $5$. The number of cascade layers of deep forest is determined automatically by the performance on the validation set. 

\begin{figure*}[!t]
	\centering
	\subfigure[Data generating function. \label{fig:regdata}]{
		\begin{minipage}[h]{0.47\columnwidth}
			\centering
			\includegraphics[height=0.19\textheight]{./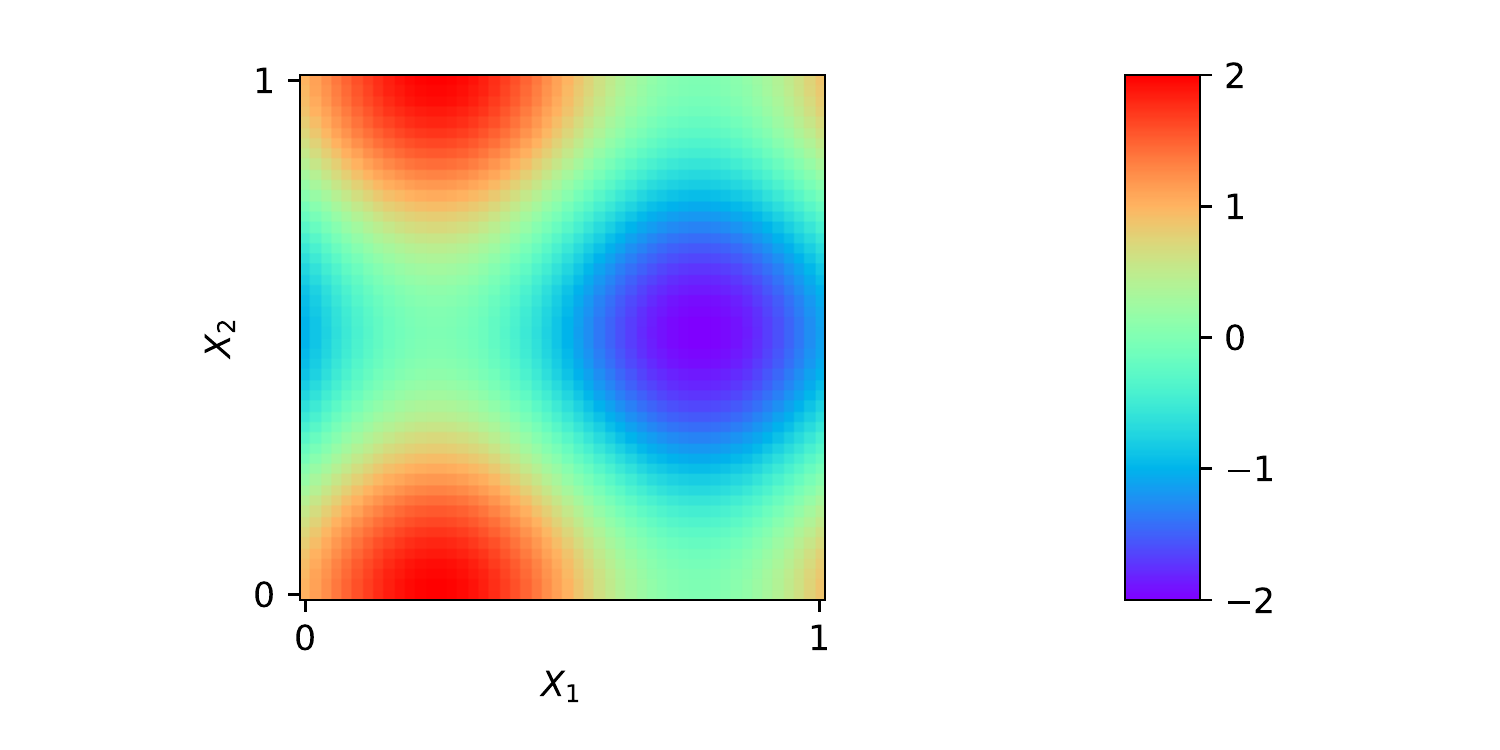}
	\end{minipage}}
	\hfill
	\subfigure[Decreasing validation error in the training process. \label{fig:regerr}]{
		\begin{minipage}[h]{0.47\columnwidth}
			\centering
			\includegraphics[height=0.19\textheight]{./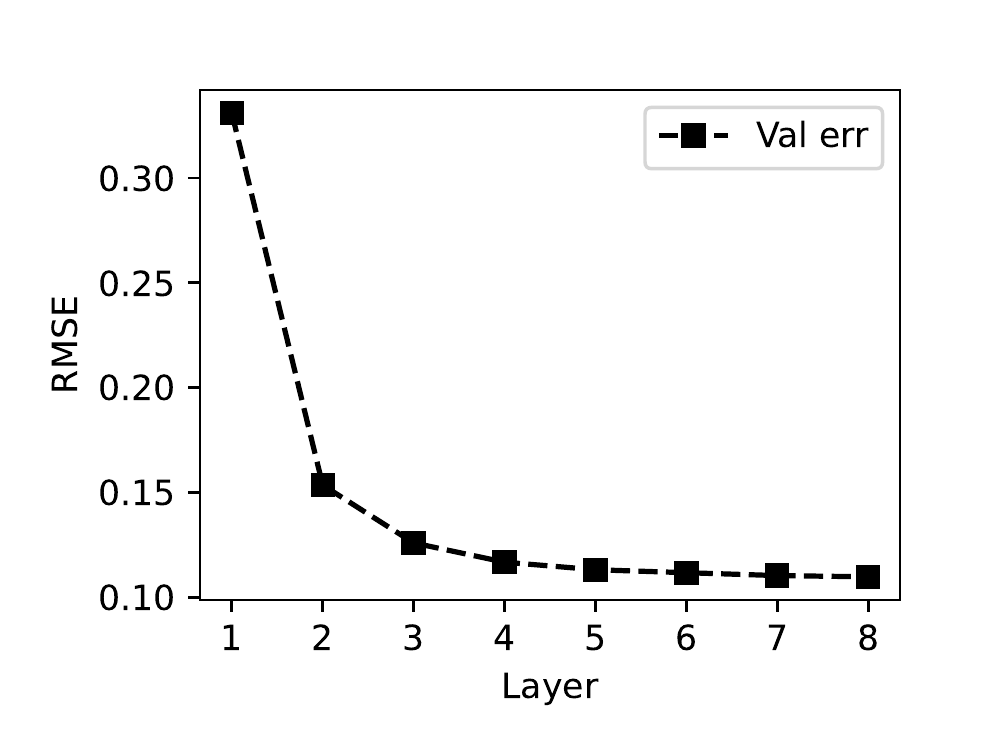}
	\end{minipage}}
	\vskip -0.1 in
	\subfigure[Prediction and feature contributions in the first layer. \label{fig:regfirstlayercontrib}]{
		\begin{minipage}[h]{0.47\columnwidth}
			\centering
			\includegraphics[width=\columnwidth]{./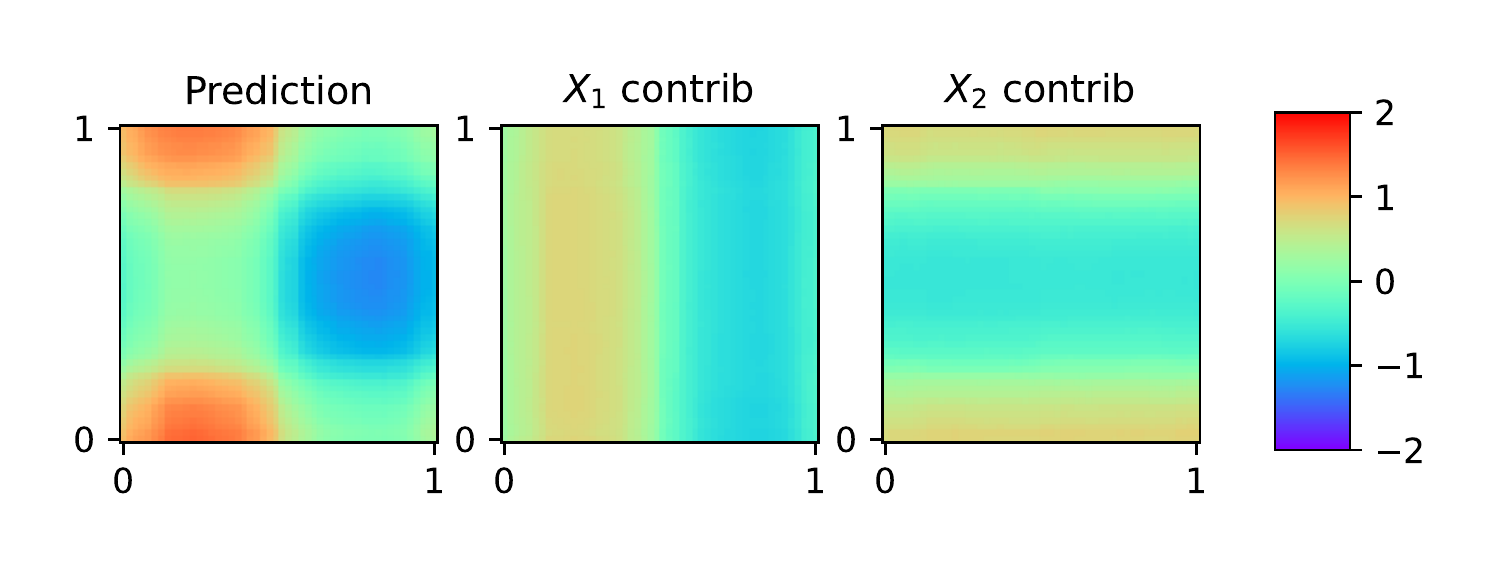}
	\end{minipage}}
	\subfigure[Prediction and feature contributions in the last layer. \label{fig:regbestlayercontrib}]{
		\begin{minipage}[h]{0.47\columnwidth}
			\centering
			\includegraphics[width=\columnwidth]{./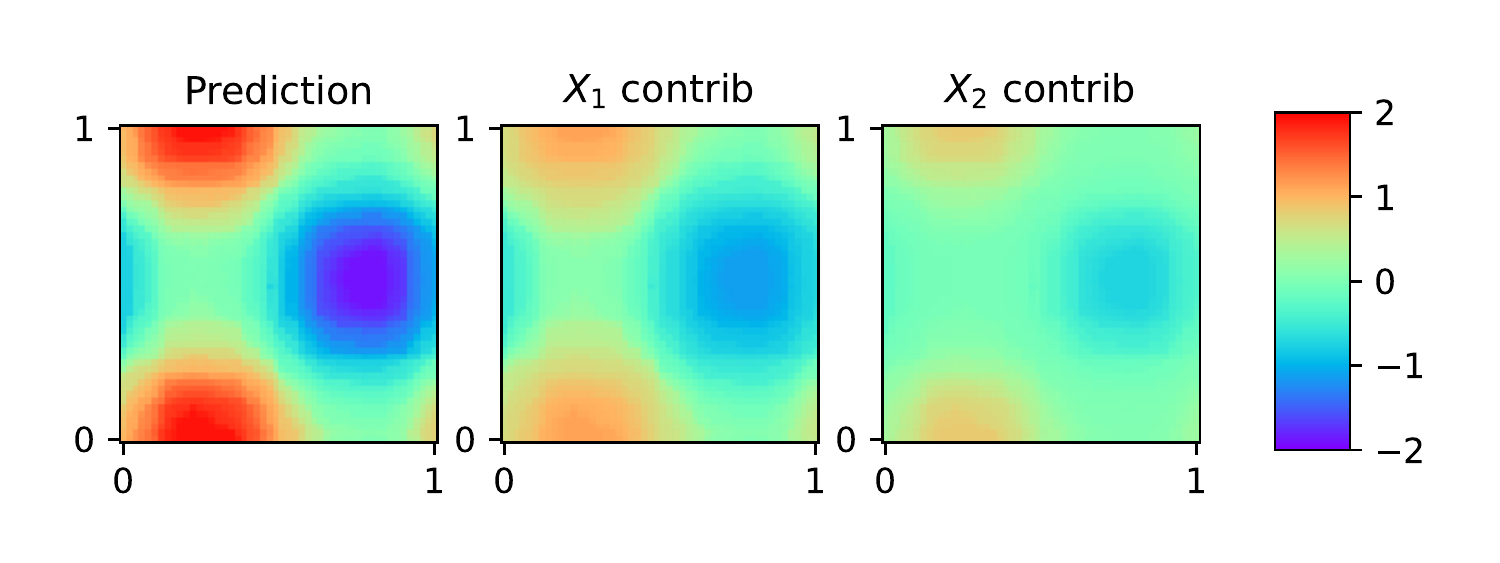}
	\end{minipage}}
	\caption{Feature contributions for the regression problem. As shown in (b) and the prediction figures in (c) (d), the predictive performance improves from the first layer to the last layer. The feature contributions show that the model captures more details in both input features.}
	\label{fig:reghot}
\end{figure*}

\paragraph{Regression}
2000 training samples and 2500 test samples are generated using function
\begin{equation}
	f(\vx) = \sin 2\pi x_1 + \cos 2\pi x_2
\end{equation}
and Figure~\ref{fig:regdata} is a heatmap of it. The decreasing validation error in Figure~\ref{fig:regerr} shows improvement of performance layer by layer. Figure~\ref{fig:regfirstlayercontrib} and \ref{fig:regbestlayercontrib} decompose the predictions in the first layer and the last layer to the contribution plots of two input features respectively. 
It is easy to observe that the first layer's prediction lacks details in the peaks and valley while last layer's prediction is much closer to the data generating function. We can also observe corresponding refinements of feature contributions. 
Note that we can see feature interaction in deep layers. In Figure~\ref{fig:regfirstlayercontrib} the change of feature contribution only along the considered feature. But in Figure~\ref{fig:regbestlayercontrib}, 
we can observe the influence of the other feature. Though this may not honestly recovered the data generating process, it helps deep forest boost performance. 

\begin{figure*}[!t]
	\vskip -0.1in
	\centering
	\subfigure[Training data shown in first two dimensions. \label{fig:3classdata}]{
		\begin{minipage}[h]{0.47\columnwidth}
			\centering
			\includegraphics[height=0.21\textheight]{./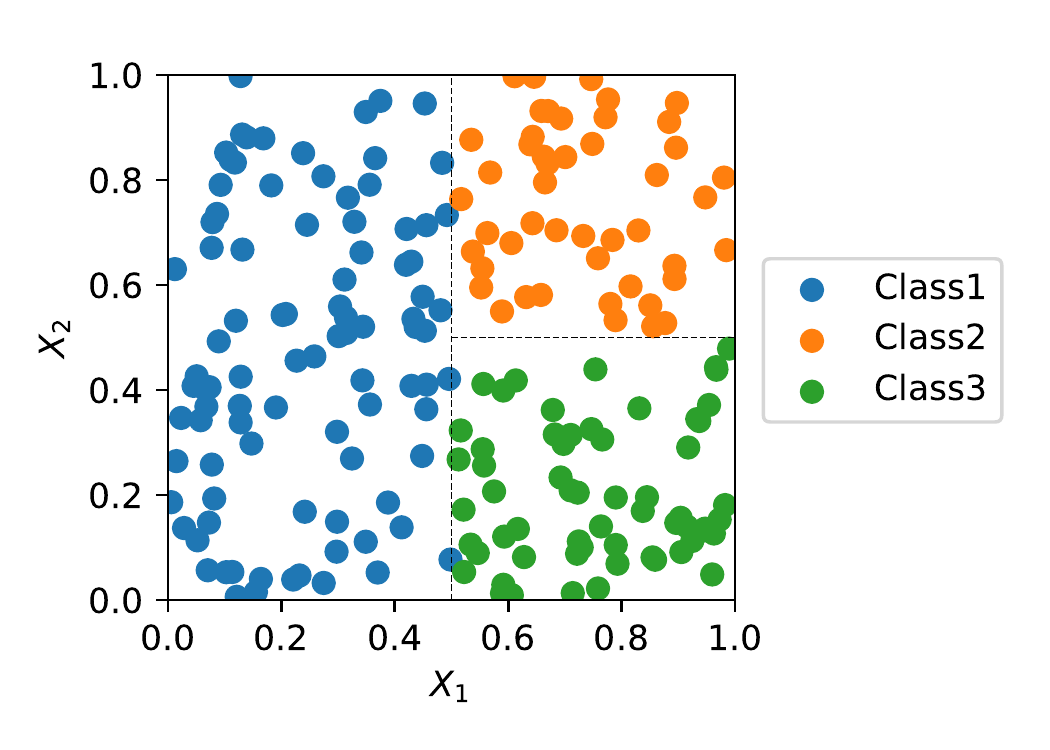}
	\end{minipage}}
	\hfill
	\subfigure[Decreasing validation error in the training process. \label{fig:3classerr}]{
		\begin{minipage}[h]{0.47\columnwidth}
			\centering
			\includegraphics[height=0.21\textheight]{./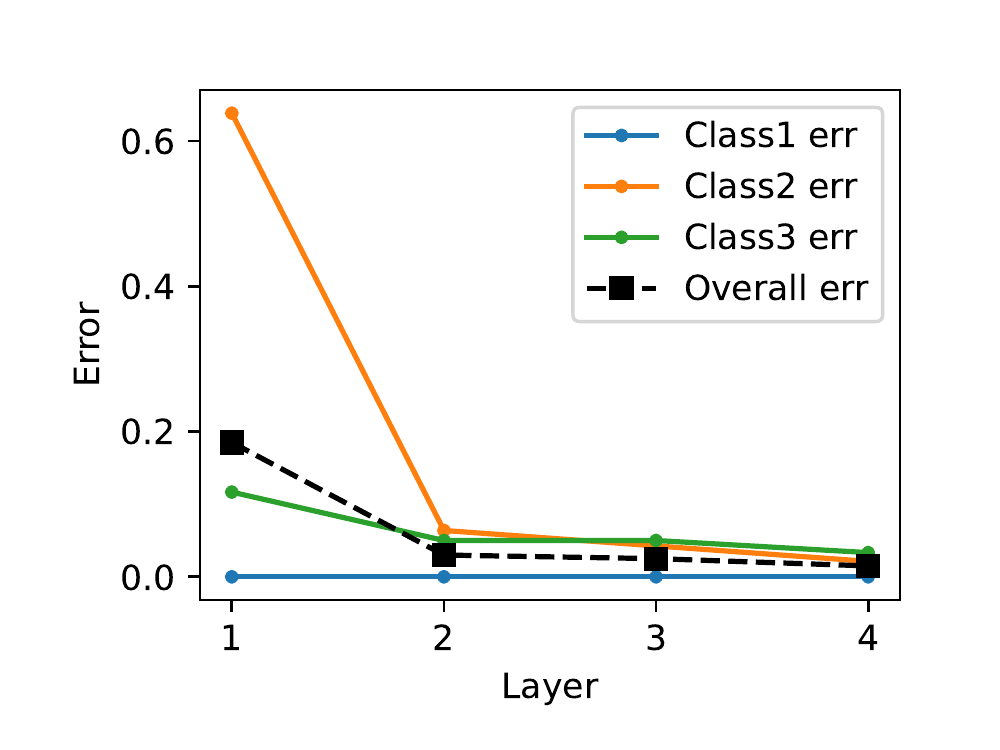}
	\end{minipage}}
	\vskip -0.1 in
	\subfigure[Prediction and feature contributions in the first layer. \label{fig:firstlayercontrib}]{
		\begin{minipage}[h]{0.47\columnwidth}
			\centering
			\includegraphics[width=\columnwidth]{./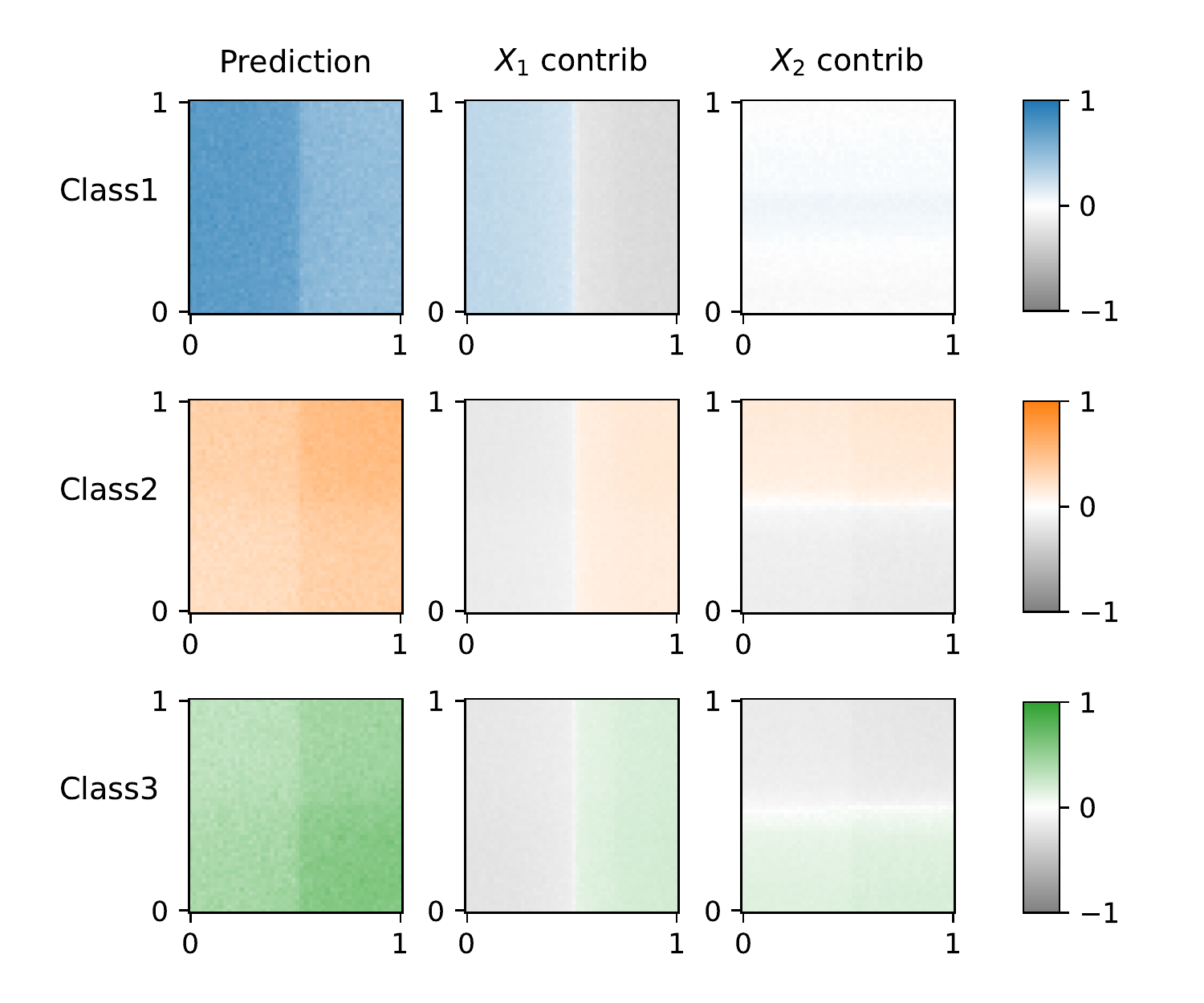}
	\end{minipage}}
	\subfigure[Prediction and feature contributions in the last layer. \label{fig:bestlayercontrib}]{
		\begin{minipage}[h]{0.47\columnwidth}
			\centering
			\includegraphics[width=\columnwidth]{./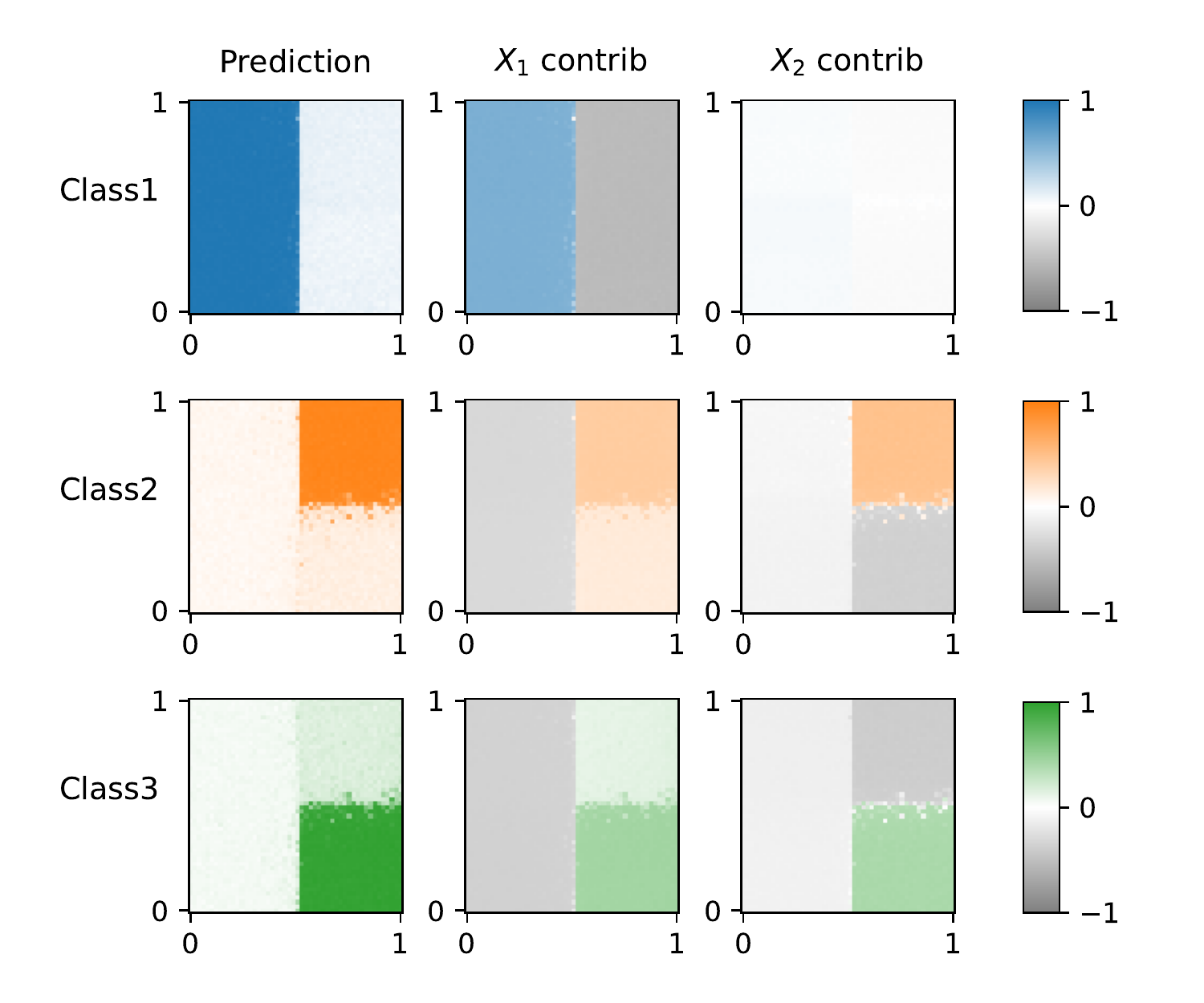}
	\end{minipage}}
	\vskip -0.05 in
	\caption{Feature contributions for the 3-class classification problem. As shown in (b) and the prediction figures in (c) (d), the predictive performance improves from the first layer to the last layer. The learned feature contributions are coarse and ambiguous in (c), while clear and precise in (d). 
		The colored areas in the contribution plots indicate an increase in the probabilities of data in these areas belonging to the corresponding class in the areas, and the gray areas indicate a decrease in probability.}
	\label{fig:contribcls}
\end{figure*}
\begin{table}[!t]
	\centering
	\begin{tabular}{c|c|c|c|c}
		\toprule
		$\vx$ & $\mu_0$ & $\widetilde{f}_{L,1}(\vx)$ & $\widetilde{f}_{L,2}(\vx)$ & $f_L(\vx)$ \\
		\midrule
		$(0,1)$ & $[\textbf{0.46}, 0.24, 0.30  ]$ & $[\textbf{0.37}, -0.14, -0.16]$ & $[\textbf{0.00}, 0.00, -0.06]$ & $[\textbf{0.88}, 0.08,0.04]$ \\
		$(1,1)$ & $[0.46, \textbf{0.24}, 0.30  ]$ & $[-0.36, \textbf{0.24}, 0.03]$ & $[0.00, \textbf{0.28}, -0.21]$ & $[0.05, \textbf{0.88}, 0.07]$ \\
		\bottomrule
	\end{tabular}
	\caption{Example of feature contributions of two sample points for the 3-class classification problem. Only values related to the two relevant dimensions are presented. The values associated with the predicted class are shown in bold.}
	\label{tbl:2points}
\end{table}

\paragraph{Classification}
We generate a 3-class classification problem. Figure~\ref{fig:3classdata} shows the training data in the first two dimensions. We add another 100 dimensions of irrelevant features uniformly valued between 0 and 1 to simulate background noise. There are 200 training samples and 2500 test samples.
With Figure~\ref{fig:3classerr} confirming us that the cascade forest structure improves performance layer by layer, Figure~\ref{fig:firstlayercontrib} and \ref{fig:bestlayercontrib} visualize the feature contribution in the first layer and the last layer to unveil what extra information the model has learned in the cascade layers. 
We plot for three classes separately, using the corresponding color with different darkness to indicate predictive probabilities. Then there are two contribution plots for $X_1$ and $X_2$ respectively. The colored areas indicate the corresponding feature enlarges the probabilities of data in these areas belonging to the corresponding class, and the gray areas versa.

We can see that the first layer's feature contribution is vague and of light color, while the last layer feature contribution is of darker color, leading to clearer peak probabilities in the prediction. The refinement of feature contribution matches the improvement of performance, and also better captures the underlying data generating scheme.
We can also observe that deep forest learns feature interaction in deep layers. Different from the simulated regression data, there actually is feature interaction in generating the classification data. However, in the first layer, the change of feature contribution is mainly along the considered feature. 
But in the last layer, the influence of the other feature emerges in the plots, showing that the cascade structure enables deep forest to learn details on the two relevant features. 

Given the last layer's feature contribution, we can easily tell how does deep forest make prediction. For example, the point $(0,1)$ has the highest predicted probability of belonging to Class 1, and the contribution to the Class 1's probability is mainly due to $X_1$. The point $(1,1)$ has the has the highest predicted probability of belonging to Class 2, and the contribution comes both from $X_1$ and $X_2$. Table~\ref{tbl:2points} shows the detailed values of decomposing the predicted probabilities of these two points into feature contributions. Only the contributions of the first two features are reported, and the other 100 irrelevant features also contribute to the prediction, but their contributions are small.

\subsection{Illustration of feature importance for deep forest}\label{sec:exp-MDI}
In this section, we show the computation of feature importance for deep forest. We design synthetic regression and classification data sets so that we can tell the relative importance of each feature. The hyper-parameter settings remain the same as in the previous section.
\paragraph{Regression}
we generate a regression problem with different coefficients for each feature and no interaction between features so that we can easily check whether the feature importance is reasonable. 
We generate 1000 training and test samples each according to the data generation function 
\begin{equation}
	f(\vx) = \sum_{k=1}^K k x_k\ .
\end{equation}

We measure the MDI quality using the percentage of correctly ranked pairs by relative estimated importance. 
Figure~\ref{fig:reg_auc} shows that as the test error decreases layer by layer, the ranking quality of relative MDI feature importance increases. 
As we would expect, deep forest learns more and improves performance through the cascade structure, so MDI would be more accurate in the deep layers. Figure~\ref{fig:reg_impt} directly compares the estimated MDI in the first layer and the last layer. We can see a clear refinement of the estimated MDI towards the underlying data generating mechanism.

\begin{figure}[t]
	\subfigure[Decreasing test error and increasing MDI quality measured by the percentage of correctly ranked pairs.\label{fig:reg_auc}]{
		\begin{minipage}[h]{0.35\columnwidth}
			\centering
			\includegraphics[height=0.18\textheight]{./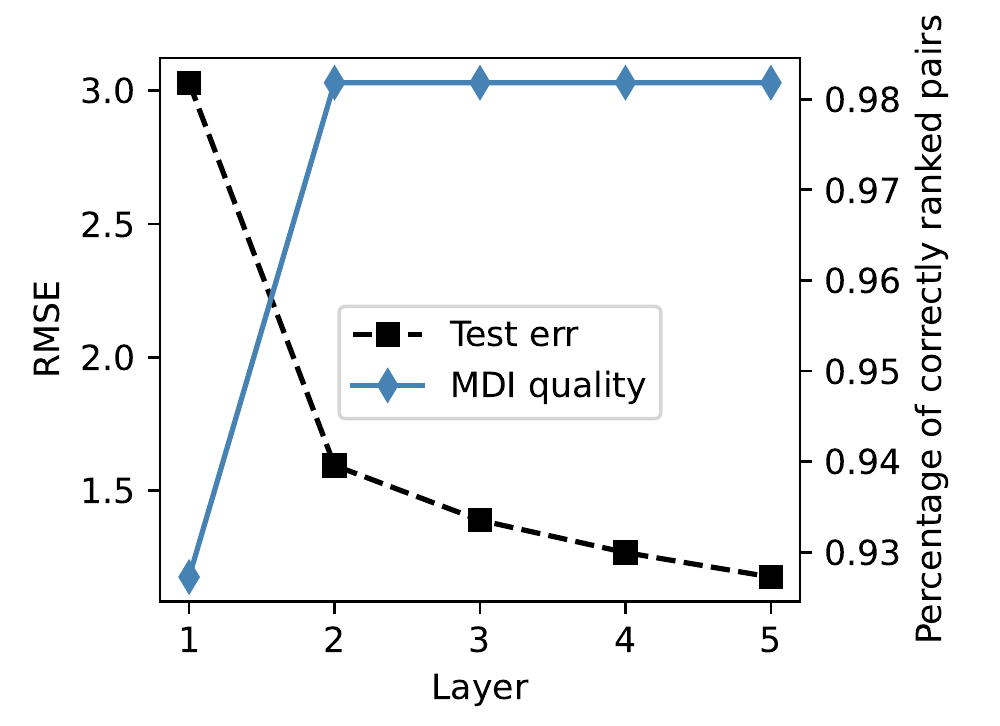}
	\end{minipage}}
	\hfill
	\subfigure[Comparison of MDI of the first layer and the last layer in deep forest. The MDI in the last layer has an obvious refinement over the result in the first layer. \label{fig:reg_impt}]{
		\begin{minipage}[h]{0.6\columnwidth}
			\centering
			\includegraphics[height=0.18\textheight]{./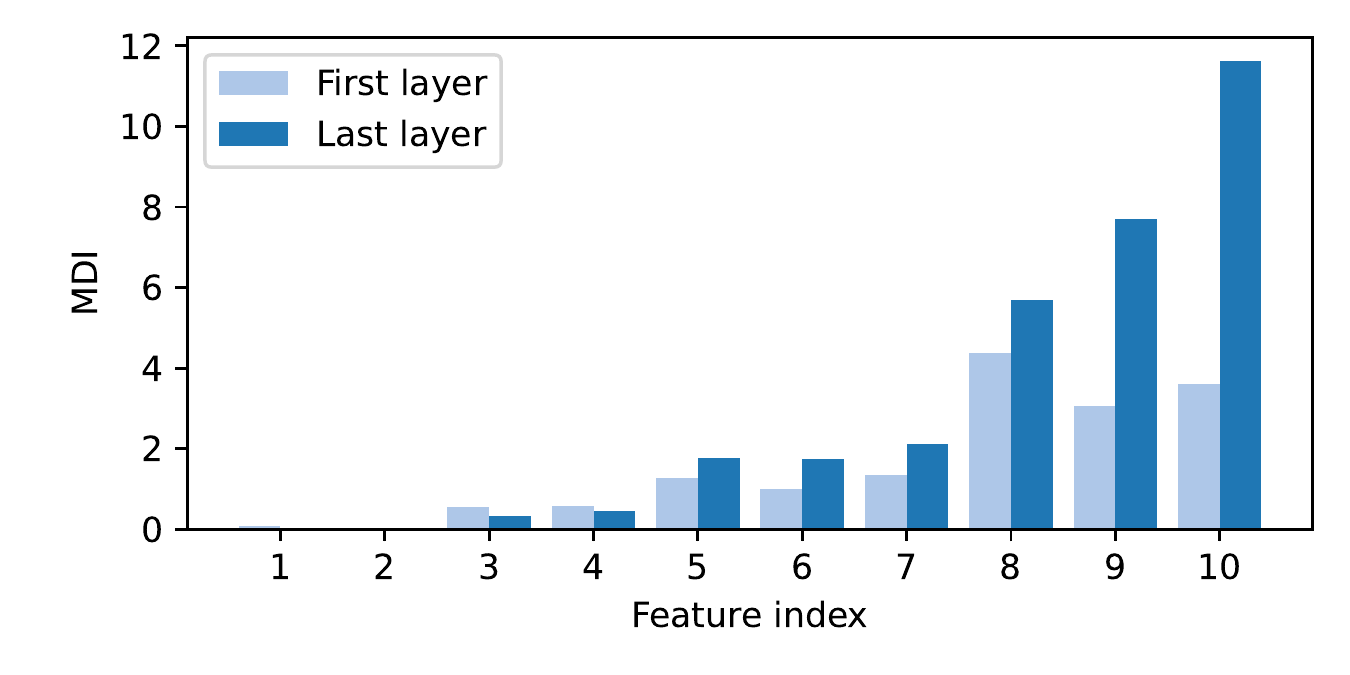}
	\end{minipage}}
	\caption{With the growing of layers in deep forest, the estimated MDI catches the underlying importance of features more accurately.}
	\label{fig:featureimportance}
\end{figure}
\begin{figure}[t]
	\subfigure[MDI of the two relevant features and the average MDI of the irrelevant features with increasing layers.\label{fig:3classimpt1}]{
		\begin{minipage}[h]{0.35\columnwidth}
			\centering
			\includegraphics[height=0.18\textheight]{./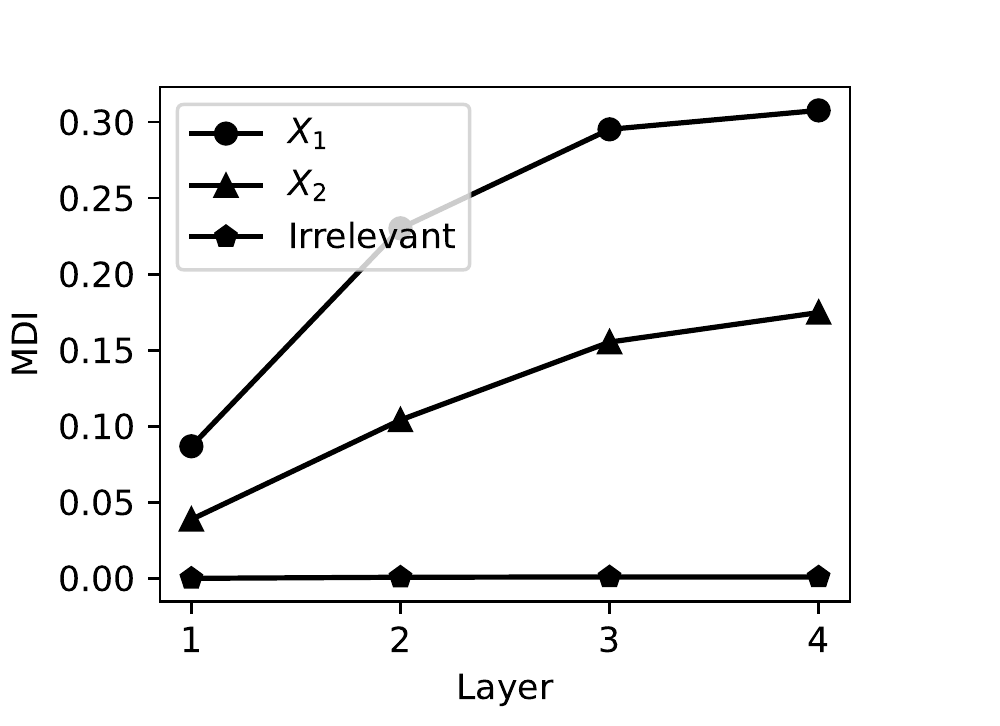}
	\end{minipage}}
	\hfill
	\subfigure[Local MDI in deep forest model with respect to each class. $X_1$ and $X_2$ play different roles in different classes. The average MDI for irrelevant features is always zero.\label{fig:3classimpthist}]{
		\begin{minipage}[h]{0.6\columnwidth}
			\centering
			\includegraphics[height=0.18\textheight]{./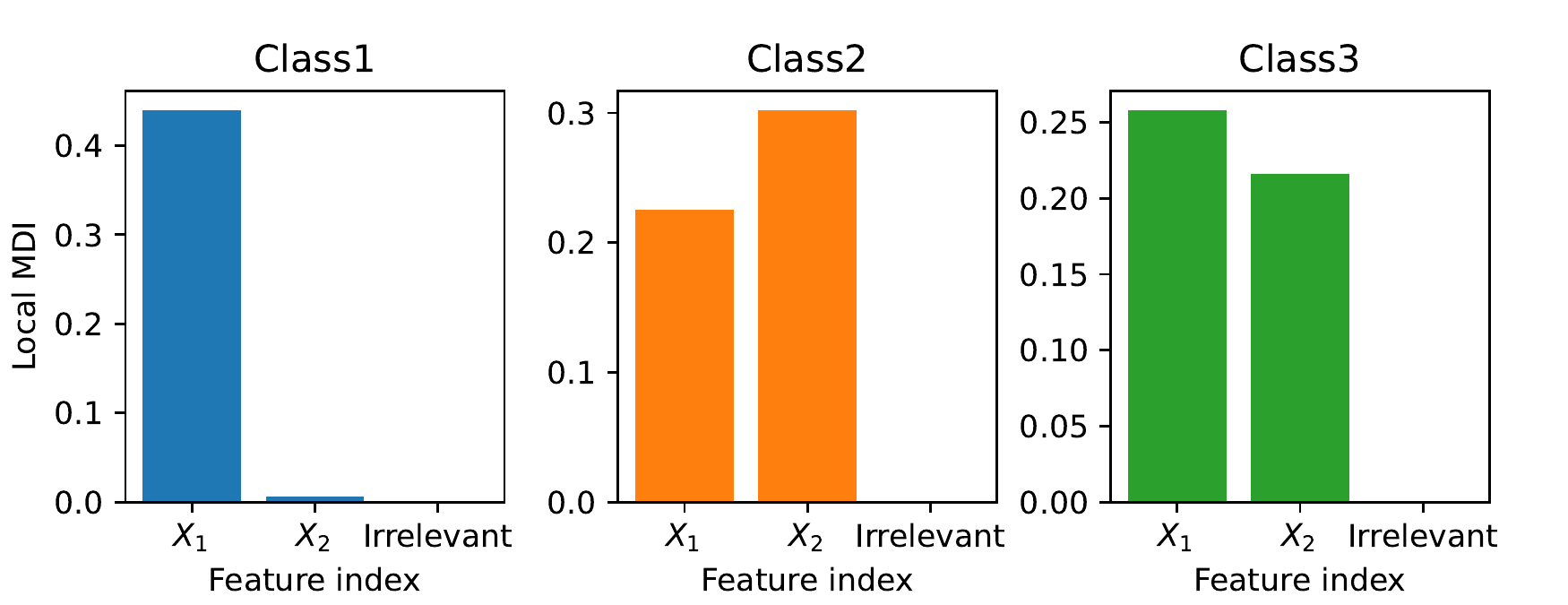}
	\end{minipage}}
	\caption{Overall MDI and local MDI for the 3-class problem.
	}
	\label{fig:3classimpt_cla}
\end{figure}

\paragraph{Classification}
We turn back to the 3-class classification problem as in Figure~\ref{fig:contribcls}. 
Figure~\ref{fig:3classimpt1} shows the change of global MDI with respect to layers. We can see that 
\begin{itemize}
	\item The global MDI of $X_1$ and $X_2$ both increase layer by layer, indicating deep forest gradually mines more information from the relevant features.
	\item $X_1$ has a higher MDI than $X_2$, which matches the data generating process, since using $X_1$ alone can separate all the instances from Class 1 which amounts to half of the data distribution.
	\item The average MDI of the 100 irrelevant features is always close to zero, indicating that the model has a good ability to distinguish relevant features from irrelevant features. 
\end{itemize}

Figure~\ref{fig:3classimpthist} shows the local MDI calculated for each class. Since equation~\eqref{eq:covrelationship} enables us to compute feature importance by summing over instances, we can calculate local feature importance for each class as 
\begin{equation*}
	\label{eq:local}
	\widehat{\operatorname{MDI}}_c(k,DF) 
	= \frac{1}{\left|\{i:y_i=c\}\right|} \sum_{y_i =c} \widetilde{f}_{L, k}\left(\boldsymbol{x}_{i}\right) \cdot \left[\mathbb{I}\left(y_i=1\right), \mathbb{I}\left(y_i=2\right), \mathbb{I}\left(y_i=3\right)\right], \ c=1,2,3,
\end{equation*}
From Figure~\ref{fig:3classimpthist} we can observe that, for Class 1, only feature $X_1$ matters, and feature $X_2$ has very low feature importance, while for Class 2 and Class 3, both $X_1$ and $X_2$ play very important roles. The above observation clearly matches the data generating mechanism.

\subsection{Comparison of feature importance ranking quality} \label{sec:exp-rank}
In this section, we compare our MDI feature importance for deep forest, named \textbf{MDI(DF)}, to the following methods.
\begin{itemize}
	\item[--] \textbf{MDI(RF)}: the mean decrease of impurity by splitting on the given feature during the training process of random forest~\cite{breiman1984classification}. But it tends to overestimate the feature importance of irrelevant features~\cite{li2019debiased}. 
	\item[--] \textbf{MDI-oob(RF)}: a debiased version of MDI using out-of-bag samples for random forest~\cite{li2019debiased}. 
	\item[--] \textbf{MDA(RF)}: the mean decrease in accuracy of a trained random forest caused by randomly permuting the values of the given feature. 
	\item[--] \textbf{MDA(DF)}: the mean decrease in accuracy of a trained deep forest caused by randomly permuting the values of the given feature. 
\end{itemize}

In each layer of deep forest, there are still 4 forests, each with 50 trees in it. Accordingly, we grow 200 trees in random forest. Since we are dealing with more complicated data sets, the depth of trees in deep forest is fixed to 8, and the trees in random forests are fully grown.  

The simulation data set \textit{sim} is generated according to \cite{li2019debiased}. It has 50 features, with the $j^{th}$ feature taking random value from $0,1,...,j$ with equal probability. We randomly choose a set $S$ of $5$ features from the first ten features as relevant features. 
The labels are generated according to $P(Y=1 \mid X)=\operatorname{Logistic}\left(\frac{2}{5} \sum_{j \in S} X_j / j-1\right)$.
We generate 1000 i.i.d. training samples and 1000 i.i.d. validation samples. The validation samples are only used for MDA based mehtods. 

The five benchmark data sets are processed with two steps. First, we copy a data set's feature matrix but randomly permute values of each feature, and then concatenate them to the original feature matrix. By this way, the feature set is now half relevant and half irrelevant. Second, since identifying irrelevant features is a relatively easier task compared with prediction, we reduce the number of training samples to avoid the situation where all the methods are equally perfect. We reduced the number of training samples of data set \textit{vehicle}, \textit{segment} and \textit{phishing} to 20\%, for data set \textit{phishing} and \textit{pendigits} we reduce to 10\%. We also keep a separate validation set for MDA based methods, whose number of samples is the same as the training set. 

\begin{table}[!t]
	\centering
	\begin{tabular}{c|ccccc}
		\toprule
		& MDI(RF) & MDI-oob(RF) & MDA(RF) & MDA(DF) & MDI(DF)\\
		\midrule
		\textit{sim} & 0.14 & 0.80 & 0.70 & 0.72 & \textbf{0.82} \\
		\textit{vehicle} & 0.95 & 0.92 & 0.68 & 0.64 & \textbf{0.99} \\
		\textit{segment} & 0.86 & 0.93 & 0.65 & 0.69 & \textbf{0.95} \\
		\textit{phishing} & 0.58 & 0.96 & 0.70 & 0.68 & \textbf{0.97} \\
		\textit{pendigits} & \textbf{1.0} & \textbf{1.0} & 0.98 & 0.99 & \textbf{1.0} \\
		\textit{satimage} & \textbf{1.0} & \textbf{1.0} & 0.60 & 0.59 & \textbf{1.0} \\
		\bottomrule
	\end{tabular}
	\caption{Average AUC scores of 20 runs for relevant feature identification. The best result of each data set is shown in bold.}
	\label{tbl:rank}
\end{table}

\begin{figure}[!t]
	\centering
	\includegraphics[width=0.65\linewidth]{./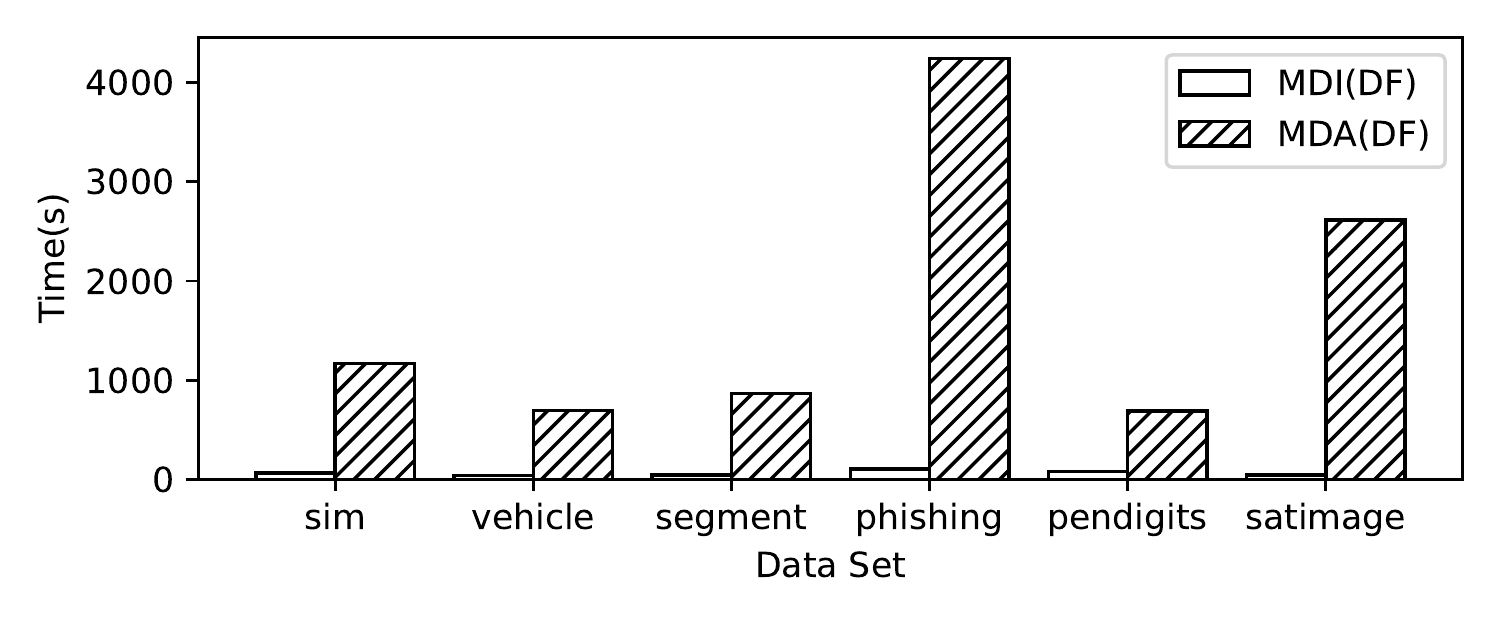}
	\caption{Running time comparison of our MDI method to that of MDA for deep forest. Each experiment is repeated 20 times and the average running time is reported.}
	\label{fig:runningtime}
\end{figure}

Table~\ref{tbl:rank} reports the average AUC of identifying relevant features. 
We can find that MDI(DF) is better than all the RF based methods, namely MDI(RF), MDI-oob(RF), and MDA(RF). Meanwhile, it is also much better than MDA(DF). This is because these generated and manipulated data sets have low accuracy, which hinders the performance of accuracy based method. Furthermore, MDI(DF) is not only better than MDA(DF) in the AUC value of identifying related features, but also requires less running time as shown in Figure~\ref{fig:runningtime}. The reported running time includes model training and feature importance computation. MDI(DF) can compute feature contribution and feature importance during training process, which lead to only mild increasing of training time. In contrast, MDA(DF) needs to call the test process repeatedly for each feature, and usually for multiple times in order to get a reliable estimation of the accuracy decrease. 

\subsection{Comparison of different calibration methods}\label{sec:exp-calib}
To evaluate the effectiveness of using partial additive calibration as Eq.~\eqref{eq:partial} in the step of calculating feature contribution, we compare MDI(DF) to that using naive multiplicative calibration as Eq.~\eqref{eq:multiplicative}, denoted by \textbf{MDI(DF)$^*$}, and that using naive multiplicative calibration as Eq.~\eqref{eq:additive}, denoted by \textbf{MDI(DF)$^+$}.
The data sets and other settings remain the same as the previous section. And we still use feature importance ranking quality as the evaluation criterion. 

\begin{table}[!t]
	\centering
	\begin{tabular}{c|ccc}
		\toprule
		& MDI(DF)$^*$ & MDI(DF)$^+$ & MDI(DF) \\
		\midrule
		\textit{sim} & 0.80 & \textbf{0.82} & \textbf{0.82} \\
		\textit{vehicle} &  0.84 & 0.98 & \textbf{0.99} \\
		\textit{segment} &  0.93 & 0.94 & \textbf{0.95} \\
		\textit{phishing} & 0.96 & 0.96 & \textbf{0.97} \\
		\textit{pendigits} & 0.98 & \textbf{1.0} & \textbf{1.0} \\
		\textit{satimage} & 0.93 & \textbf{1.0} & \textbf{1.0} \\
		\bottomrule
	\end{tabular}
	\caption{Comparison of different calibration methods based on average AUC scores for relevant feature identification. The best result of each data set is shown in bold.}
	\label{tbl:calibration}
\end{table}

Table~\ref{tbl:calibration} shows that MDI(DF), which uses partial additive calibration, achieves best AUC in all the data sets. The performance of MDI(DF)$^+$, which uses naive additive calibration, follows closely behind. The comparison results justifies the our design of calibration method.

\subsection{Application on bike sharing task}\label{sec:bikesharing}
This section does not aim to compare, instead it provides an example with a real-world bike sharing task~\cite{bikesharing}, showing how our deep forest interpretation tools can be used in an application. 

The bike sharing data set contains 731 days of records. For clearer demonstration, we process the data set and only keep six features. This is a regression task with a single prediction value, so the feature contributions are scalars. The data are randomly split into training and test data sets in a 2:1 ratio. 

The explanatory results of each prediction help the end user understand how the deep forest model works, enabling more reliable deployment of model and helping to find out why a prediction is unexpected. For example, table~\ref{tbl:bikecontribution} shows the feature contribution of a trained deep forest on two days in the test set. We can see the platform doing better in 2012 than it did in 2011. In addition, the low temperature and rainy and windy weather on 2011/12/7 both contributed to a decrease in bike rental volumes. Mild temperature and nice weather have led to an increase in bike rental volumes on 2012/9/21. 
Figure~\ref{fig:bikeMDI} shows the MDI feature importance of deep forest. We can see that temperature is the most important feature. In addition, the year is also an important feature, which shows that the platform has improved a lot in two years. Other weather conditions also play a role, but whether it is a working day has little impact on bike rental volumes.

\begin{table}[!t]
	\centering
	\begin{tabular}{cccccccc}
		\toprule
		date & & Year & isWorkingDay & isClearDay & Temperature & Humidity & WindSpeed\\
		\midrule
		\midrule
		\multirow{2}*{2011/12/7} & feature value & 2011 & 1 & 0 & 17$^{\circ}$C & 97\% & 18 km/h \\
		&feature contribution & -714 & 24 & -104 & -1479 & -244 & -136\\
		\midrule
		\midrule
		\multirow{2}*{2012/9/21} & feature value & 2012 & 1 & 1 & 25$^{\circ}$C & 67\% & 10 km/h \\
		&feature contribution & 1031 & 44 & 252 & 1117 & 145 & 74\\
		\bottomrule
	\end{tabular}
	\caption{Example of feature contribution of deep forest in the bike sharing task. Positive and negative contributions are based on the training label mean.}
	\label{tbl:bikecontribution}
\end{table}

\begin{figure}[!t]
	\centering
	\includegraphics[width=0.65\linewidth]{./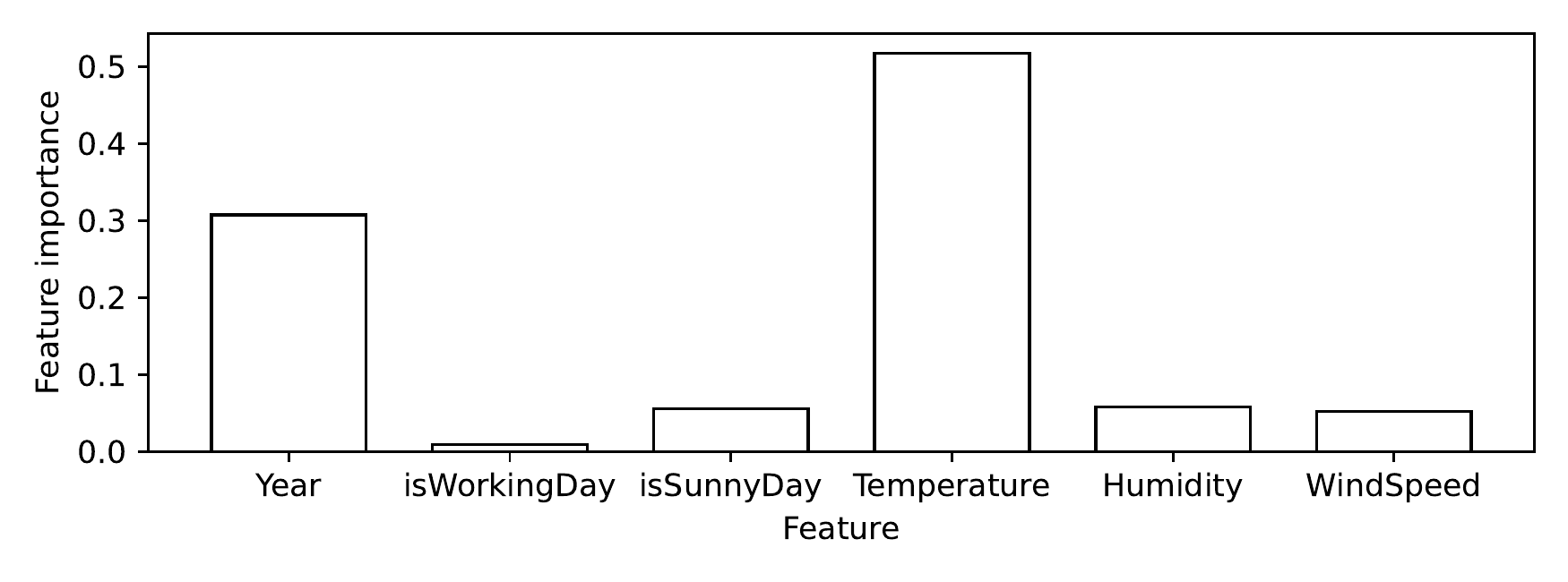}
	\caption{MDI feature importance of deep forest in the bike sharing task. The sum of feature importance is normalized.}
	\label{fig:bikeMDI}
\end{figure}

\section{Conclusion}\label{sec:conclusion}

In this paper, we propose two interpretation tools for deep forests, namely feature contribution and feature importance. The former decomposes every single prediction of deep forest into the contributions of each original feature. The latter characterizes the overall impact of a feature in the whole model. 
Experimental results show that they can not only faithfully reflect the influence of each feature on deep forest prediction, but also have a better characterization of feature importance than random forest. We believe that the interpreting tools we provide can further promote the application of deep forests, as well as deepen our understanding of the data and model.

\printbibliography



%
%

\end{document}